\newcommand{\icml}[1]{\iftoggle{icml}{#1}{}}
\newcommand{\arxiv}[1]{\iftoggle{icml}{}{#1}}
\titlespacing{\section}{0pt}{10pt}{0pt}
\titlespacing{\subsection}{0pt}{\parskip}{0pt}
\titlespacing{\subsubsection}{0pt}{\parskip}{0pt}
\xpatchcmd{\proof}{\itshape}{\normalfont\proofnameformat}{}{}
\newcommand{\proofnameformat}{\bfseries}
\renewcommand\paragraph{\@startsection{paragraph}{4}{\z@}%
                                    {2pt \@plus0pt \@minus0pt}%
                                    {-1em}%
                                    {\normalfont\normalsize\bfseries}}
\DeclarePairedDelimiter{\abs}{\lvert}{\rvert} %
\DeclarePairedDelimiter{\brk}{[}{]}
\DeclarePairedDelimiter{\crl}{\{}{\}}
\DeclarePairedDelimiter{\prn}{(}{)}
\DeclarePairedDelimiter{\dtri}{\llangle}{\rrangle}
\DeclareMathOperator{\En}{\mathbb{E}}
\newcommand{\wh}[1]{\widehat{#1}}
\def\ddefloop#1{\ifx\ddefloop#1\else\ddef{#1}\expandafter\ddefloop\fi}
\def\ddef#1{\expandafter\def\csname bb#1\endcsname{\ensuremath{\mathbb{#1}}}}
\def\ddefloop#1{\ifx\ddefloop#1\else\ddef{#1}\expandafter\ddefloop\fi}
\def\ddef#1{\expandafter\def\csname b#1\endcsname{\ensuremath{\mathbf{#1}}}}
\def\ddef#1{\expandafter\def\csname sf#1\endcsname{\ensuremath{\mathsf{#1}}}}
\def\ddef#1{\expandafter\def\csname c#1\endcsname{\ensuremath{\mathcal{#1}}}}
\def\ddef#1{\expandafter\def\csname h#1\endcsname{\ensuremath{\widehat{#1}}}}
\def\ddef#1{\expandafter\def\csname hc#1\endcsname{\ensuremath{\widehat{\mathcal{#1}}}}}
\def\ddef#1{\expandafter\def\csname t#1\endcsname{\ensuremath{\widetilde{#1}}}}
\def\ddef#1{\expandafter\def\csname tc#1\endcsname{\ensuremath{\widetilde{\mathcal{#1}}}}}
\newcommand{\ls}{\ell}
\newcommand{\pmo}{\crl*{\pm{}1}}
\newcommand{\vphi}{\varphi}
\newcommand{\grad}{\nabla}
\newcommand{\ldef}{\vcentcolon=}
\newcommand{\tododf}[2][]{\todo[size=\scriptsize,color=green!20!white,#1]{DF: #2}}
\newcommand{\todobb}[2][]{\todo[size=\scriptsize,color=blue!20!white,#1]{BB:
    #2}}
\newcommand{\tododf}[2][]{}
\newcommand{\todobb}[2][]{}
\newcommand{\pref}[1]{\cref{#1}}
\newcommand{\BlackBox}{\rule{1.5ex}{1.5ex}}  %
\renewcommand{\qedsymbol}{\BlackBox}
\renewenvironment{proof}{\par\noindent{\bf Proof. }}{\hfill\BlackBox\\[2mm]}
\newcommand{\qedarxiv}{\hfill\qedsymbol\\[2mm]}
\renewenvironment{proof}{\par\noindent{\bf Proof. }}{\qedarxiv}
\newcommand{\pushright}[1]{\ifmeasuring@#1\else\omit\hfill$\displaystyle#1$\fi\ignorespaces}
\newcommand{\pushleft}[1]{\ifmeasuring@#1\else\omit$\displaystyle#1$\hfill\fi\ignorespaces}
\newtheorem{example}{Example} 
\newtheorem{theorem}{Theorem}
\newtheorem{lemma}[theorem]{Lemma} 
\newtheorem{proposition}[theorem]{Proposition} 
\newtheorem{corollary}[theorem]{Corollary}
\newtheorem{definition}[theorem]{Definition}
\newcommand\blfootnote[1]{%
  \begingroup
  \renewcommand\thefootnote{}\footnote{#1}%
  \addtocounter{footnote}{-1}%
  \endgroup
}
\DeclareFontFamily{OMX}{MnSymbolE}{}
\DeclareSymbolFont{MnLargeSymbols}{OMX}{MnSymbolE}{m}{n}
\DeclareFontShape{OMX}{MnSymbolE}{m}{n}{
    <-6>  MnSymbolE5
   <6-7>  MnSymbolE6
   <7-8>  MnSymbolE7
   <8-9>  MnSymbolE8
   <9-10> MnSymbolE9
  <10-12> MnSymbolE10
  <12->   MnSymbolE12
}{}
\DeclareFontShape{OMX}{MnSymbolE}{b}{n}{
    <-6>  MnSymbolE-Bold5
   <6-7>  MnSymbolE-Bold6
   <7-8>  MnSymbolE-Bold7
   <8-9>  MnSymbolE-Bold8
   <9-10> MnSymbolE-Bold9
  <10-12> MnSymbolE-Bold10
  <12->   MnSymbolE-Bold12
}{}
\let\llangle\@undefined
\let\rrangle\@undefined
\DeclareMathDelimiter{\llangle}{\mathopen}%
                     {MnLargeSymbols}{'164}{MnLargeSymbols}{'164}
\DeclareMathDelimiter{\rrangle}{\mathclose}%
                     {MnLargeSymbols}{'171}{MnLargeSymbols}{'171}
\newcommand{\vast}{\bBigg@{4}}
\newcommand{\Vast}{\bBigg@{5}}
\newcommand{\Nats}{\mathbb{N}}
\newcommand{\Nor}{\mathcal{N}}
\newcommand{\Reals}{\mathbb{R}}
\newcommand{\Ord}{\mathcal{O}}
\newcommand{\bigoh}{\cO}
\newcommand{\bigoht}{\tilde{\cO}}
\newcommand{\A}{\mathcal{A}}
\newcommand{\F}{\mathcal{F}}
\newcommand{\1}{\mathds{1}}
\newcommand{\eps}{\varepsilon}
\newcommand{\scfunc}{\varphi}
\newcommand{\expfunc}{\psi}
\newcommand{\dist}{\mathcal{D}}
\newcommand{\borel}{\mathcal{P}}
\newcommand{\borelF}{{\borel}_{\! \scriptscriptstyle \F}}
\newcommand{\hcube}{\mathcal{V}}
\newcommand{\lipF}{\F_\hcube}
\DeclareMathOperator{\interior}{interior}
\newcommand{\vect}[1]{\bm{#1}}
\newcommand{\vectm}[1]{\bm{#1}}
\newcommand{\node}[1]{{#1}_t(\vect{y})}
\DeclareMathOperator*{\newlim}{\mathrm{lim}\vphantom{\mathrm{infsup}}}
\DeclareMathOperator*{\newmin}{\mathrm{min}\vphantom{\mathrm{infsup}}}
\DeclareMathOperator*{\newmax}{\mathrm{max}\vphantom{\mathrm{infsup}}}
\DeclareMathOperator*{\newinf}{\mathrm{inf}\vphantom{\mathrm{infsup}}}
\DeclareMathOperator*{\newsup}{\mathrm{sup}\vphantom{\mathrm{infsup}}}
\renewcommand{\lim}{\newlim}
\renewcommand{\min}{\newmin}
\renewcommand{\max}{\newmax}
\renewcommand{\inf}{\newinf}
\renewcommand{\sup}{\newsup}
\DeclareMathOperator{\EE}{\mathbb{E}}
\DeclareMathOperator{\PP}{\mathbb{P}}
\newcommand{\EEy}{\mathop{\bbE}_{\scriptstyle \vect{y}\sim\vect{p}}}
\newcommand{\EEyn}[1]{\mathop{\bbE}_{\scriptstyle
    \vect{y}_{#1}\sim\vect{p}}}
\newcommand{\KL}[2]{\textnormal{KL}\left(#1 \ \Vert \ #2\right)}
\newcommand{\Fx}{\F \circ \vect{x}}
\newcommand{\Fpx}{\F_{\scriptstyle \vect{p},\vect{x}}}
\newcommand{\Vpx}{V_{\scriptstyle \vect{p},\vect{x}}}
\newcommand{\optf}{f_{\scriptscriptstyle \dist}^*}
\newcommand{\Norm}[1]{\left\lVert#1\right\rVert}
\newcommand{\upp}[1]{^{\scriptscriptstyle(#1)}}
\newcommand{\sn}{\sum_{t=1}^n}
\newcommand{\ip}[2]{\left\langle #1, #2 \right\rangle}
\newcommand{\logloss}{\ell}
\newcommand{\context}{\mathcal{X}}
\newcommand{\pred}{\hat p}
\newcommand{\game}[2]{\left\llangle #1 \right\rrangle_{t=1}^{#2}}
\newcommand{\RegretR}{\mathcal{R}}
\newcommand{\regret}[3]{{\RegretR_{n\!}}\left(#2; #1, #3\right)} %
\newcommand{\logregret}[3]{{\RegretR_{n\!}}\left(#2; #1, #3\right)} %
\newcommand{\minimax}[1]{\RegretR_{n}(#1)} %
\newcommand{\logminimax}[1]{\RegretR_{n}(#1)} %
\newcommand{\RegretU}{\mathrm{U}}
\newcommand{\newbound}[1]{\RegretU_n^{\scriptstyle \mathrm{new}}(#1)} %
\newcommand{\oldbound}[1]{\RegretU_n^{\scriptstyle \mathrm{old}}(#1)} %
\newcommand{\polylog}{\mathrm{polylog}}
\newcommand{\phat}{\hat{p}}
\newcommand{\ellinf}{\ls_{\infty}}
\newcommand{\Linf}{L_{\infty}}
\newcommand{\supnorm}{sup-norm\xspace}
\newcommand{\seqcover}[3]{\Nor_{#3 \!}\left(#1, #2\right)}
\newcommand{\unifcover}[3]{{\widebar\Nor}_{#3}\left(#1, #2\right)}
\newcommand{\seqentropy}[4]{\mathcal{H}_{#4 \!}\left(#1, #2, #3\right)}
\newcommand{\empentropy}[4]{{\hat {\mathcal{H}}}_{#4}\left(#1, #2, #3\right)}
\newcommand{\dsup}{d_{\mathrm{sup}}}
\newcommand{\berndist}{\mathrm{Ber}}
\newcommand{\unifdist}{\mathrm{Unif}}
\renewcommand{\hat}[1]{\wh{#1}}
\newcommand{\trn}{\top}
\icmltitlerunning{Tight Bounds on Minimax Regret under Logarithmic Loss via Self-Concordance}
\title{Tight Bounds on Minimax Regret under\\ Logarithmic Loss via Self-Concordance}
  \date{}
\author{
Blair Bilodeau\footnote{University of Toronto, Vector Institute, and Institute for Advanced Study}
\\ {\small \texttt{blair@utstat.toronto.edu}}
\and 
Dylan J. Foster\footnote{Massachusetts Institute of Technology}
\\ {\small\texttt{dylanf@mit.edu}}
\and
Daniel M. Roy\footnotemark[1]
\\ {\small \texttt{droy@utstat.toronto.edu}}
}
\begin{document}
\arxiv{\maketitle}
\arxiv{\blfootnote{Published in ICML 2020}}

\icml{
\twocolumn[
\icmltitle{Tight Bounds on Minimax Regret under\\ Logarithmic Loss via Self-Concordance}
\icmlsetsymbol{equal}{*}

\begin{icmlauthorlist}
\icmlauthor{Blair Bilodeau}{toronto,vector,ias}
\icmlauthor{Dylan J. Foster}{mit}
\icmlauthor{Daniel M. Roy}{toronto,vector,ias}
\end{icmlauthorlist}

\icmlaffiliation{toronto}{Statistical Sciences, University of Toronto}
\icmlaffiliation{vector}{Vector Institute}
\icmlaffiliation{mit}{Massachusetts Institute of Technology}
\icmlaffiliation{ias}{Institute for Advanced Study}

\icmlcorrespondingauthor{Blair Bilodeau}{blair.bilodeau@mail.utoronto.ca}
\icmlkeywords{online learning; logarithmic loss; minimax regret; sequential entropy}

\vskip 0.3in
]

\printAffiliationsAndNotice{}  %
}

\begin{abstract}
We consider the classical problem of sequential probability assignment
under logarithmic loss while competing against an arbitrary,
potentially nonparametric class of experts. We obtain tight bounds
on the minimax regret via a new approach that exploits the
self-concordance property of the logarithmic loss. We show that for any
expert class with (sequential) metric entropy $\bigoh(\gamma^{-p})$ at scale $\gamma$, the minimax regret is
$\bigoh(n^{\frac{p}{p+1}})$, and that this rate cannot be improved without additional
assumptions on the expert class under consideration. As an application of our techniques, we resolve the minimax regret for nonparametric Lipschitz classes
of experts.

\end{abstract}

\icml{
  \global\icmlrulercount 0\relax
  }

\section{Introduction}

Sequential probability assignment is a classical problem that has been studied intensely throughout
domains including portfolio optimization
\citep{cover91universal,cover96universal,cross03universal}, information theory
\citep{rissanen84coding,merhav98prediction,xie00minimax}, and---more recently---adversarial
machine learning \citep{goodfellow14gans,grnarova17onlinegans,liang18adversarial}. The
 goal is for a \emph{player} to assign probabilities to an arbitrary, potentially
 adversarially generated sequence of outcomes, and to do so nearly as well as a
 benchmark class of \emph{experts}. More formally, consider
 the following protocol: for rounds $t =
1,\dots,n$, the player receives a \emph{context} $x_t\in\cX$, predicts a probability
$\pred_t \in [0,1]$ (using only the context $x_t$), observes a binary
outcome $y_t \in
\{0,1\}$, and incurs the \emph{logarithmic loss} (``log loss''), defined by 
\begin{align*}
  \logloss(\pred_t, y_t) = -y_t \log(\pred_t) - (1-y_t)\log(1-\pred_t). 
\end{align*}
The log loss penalizes the player based on how much probability mass they place on the
actual outcome. Without distributional assumptions, one cannot control the
total incurred loss, and so it is standard to study the \emph{regret};
that is, the difference between the player's total loss and the total
loss of the single best predictor in a (potentially uncountable)
reference class of experts.
Writing the vector of player predictions as $\hat{\vect{p}} = (\hat
p_1,\dots,\hat p_n)$, and likewise defining $\vect{x} =
(x_1,\dots,x_n)$ and $\vect{y} =
(y_1,\dots,y_n)$, the player's regret with respect to a class of experts $\F \subseteq [0,1]^\context$ is defined as
\begin{align*}
  \regret{\hat{\vect{p}}}{\F}{\vect{x},\vect{y}} = \sn \logloss(\pred_t, y_t) - \inf_{f \in \F} \sn \logloss(f(x_t), y_t). 
\end{align*}
Compared to similar sequential prediction problems found throughout the
literature on online learning \citep{plg06book,hazan16oco,bandit20book}, the
distinguishing feature of the sequential probability assignment
problem is the log loss, which amounts to evaluating the
log-likelihood of the observed outcome under the player's predicted
distribution. Typical results in online learning assume
the loss function to be convex and smooth or
Lipschitz (e.g., absolute loss or square loss on bounded predictions) or at least bounded (e.g.,
classification loss), while the log loss may have unbounded values and
unbounded gradient. Consequently, beyond simple classes of
experts, naively applying the standard tools of online learning leads to loose guarantees; instead, we exploit
refined properties of the log loss to obtain tight regret bounds for
sequential probability assignment.

\paragraph{Minimax Regret.}
We investigate the fundamental limits for sequential probability
assignment through the lens of minimax analysis. We focus on \emph{minimax regret}, defined by
\begin{align}
\hspace{-.5em}
  \minimax{\F}
  = \sup_{x_1} \inf_{\pred_1} \sup_{y_1} \cdots \sup_{x_n} \inf_{\pred_n} \sup_{y_n} \regret{\hat{\vect{p}}}{\F}{\vect{x},\vect{y}},
  \label{eq:rep_op_gen}
\end{align}
where $x_t\in\cX_t$ (defined formally in \cref{sec:prelim}), $\pred_t \in [0,1]$, and $y_t
\in \{0,1\}$ for all $t\in[n]$. The minimax regret expresses
worst-case performance of
the best player across all adaptively chosen data sequences. For
simple (e.g., parametric) classes of experts, the minimax regret is
well-understood, including exact constants
\citep{rissanen1986complexity,rissanen1996fisher,shtarkov87coding,freund2003predicting}. For
rich classes of experts, however, tight guarantees are not known, and
hence our aim in this paper is to answer:
\emph{How does the complexity of $\cF$ shape the minimax regret?}

A standard object used to control the minimax regret in statistical
learning and sequential prediction is the
\emph{covering number}, which is a measure of the complexity of an
expert class $\F$. The covering number for $\F$ is the size of the smallest subset of $\F$ such that every
element of $\F$ is \emph{close} to an element of the subset, where
close is defined for appropriate notions of scale and distance. Early
covering-based bounds for sequential probability assignment \citep{opper99logloss,cesabianchi99logloss} use coarse notions of
distance, but these bounds become vacuous for
sufficiently rich expert classes.

More recently, \citet{rakhlin15binary} gave sharper guarantees that use a finer notion of cover, referred
to as a \emph{sequential cover} (see \cref{def:seq_cover}), which has previously been shown to
characterize the minimax regret for simpler online learning problems
with Lipschitz losses \citep{rakhlin15learning}. Unfortunately, to deal with the fact that log loss is
non-Lipschitz, this result (and all prior work in this line)
approximates regret by truncating the allowed probabilities away from $0$ and
$1$. This approximation forces the gradient of log loss to be bounded,
but leads to suboptimal bounds for rich expert classes. Hence,
\citet{rakhlin15binary} posed the problem of whether the minimax regret for
sequential probability assignment can be characterized using
only the notion of sequential covering. This question is natural, as
the answer is affirmative for the absolute loss
\citep{rakhlin15learning}, square loss \citep{rakhlin14nonparametric}, and other
common Lipschitz losses such as the hinge.

\subsection{Overview of Results}
Our main result is to show that for experts classes $\cF$ for which
the \emph{sequential entropy} (the log of the sequential
covering number) at scale $\gamma$ grows as $\gamma^{-p}$, we have
\[
  \minimax{\F} \leq\ \bigoh(n^{\frac{p}{p+1}}).
\]
This upper bound recovers the best-known rates for all values of $p$,
and offers strict improvement whenever $p>1$ (i.e., whenever the class
$\cF$ is sufficiently complex). We further show that for certain expert
classes---in particular, nonparametric Lipschitz classes over $\brk*{0,1}^{p}$---this rate
cannot be improved. As a consequence, we resolve the minimax
regret for these classes.

An important implication of our results is that for general
classes $\cF$, the optimal rate for regret cannot be characterized
purely in terms of sequential covering numbers; this follows by
combining our improved upper and lower bounds with an earlier
observation from \citet{rakhlin15binary}.

Our upper bounds are obtained through a new technique that exploits
the curvature of log loss (specifically, the property of
\emph{self-concordance}) to bound the regret. This allows us to handle
the non-Lipschitzness of the log loss directly without invoking the
truncation and approximation arguments that lead to suboptimal regret in previous approaches.

\subsection{Related Work} 
For finite expert classes, it is well-known that the
minimax regret $\logminimax{\F}$ is of order
$\log\abs{\F}$ \citep{vovk98game}. Sharp guarantees are also known for
countable expert classes \citep{banerjee06bayesian} and parametric classes
\citep{rissanen1986complexity,rissanen1996fisher,shtarkov87coding,xie00minimax,freund2003predicting,miyaguchi19complexity};
see also Chapter 9 of \citet{plg06book}.

In this work, we focus on obtaining tight guarantees for rich,
nonparametric classes of experts. Previous work in this
direction has obtained bounds for large expert classes using various
notions of complexity for the class.
\citet{opper99logloss, cesabianchi99logloss} bound the minimax
regret under log loss using covering numbers for the expert class
defined with respect to the \supnorm over the context space;
that is, $\dsup(f,g) = \sup_{x\in\cX}\abs*{f(x)-g(x)}$. Covering with respect to all the elements in the domain is rather
restrictive, and there are many cases for which the sup-norm
covering number is infinite even though the class is learnable, or
where the \supnorm cover has undesirable dependence on the dimension
of the context space.

Building on a line of work which characterizes minimax rates for
Lipschitz losses \citep{rakhlin14nonparametric,rakhlin15learning}, \citet{rakhlin15binary} gave
improved upper bounds for sequential probability assignment based on
\emph{sequential covering numbers}, which require that covering
elements are close only on finite sequences of contexts induced by
binary trees. Sequential covering numbers can be much smaller than \supnorm
covers. For example, infinite
dimensional linear functionals do not admit a finite \supnorm cover, but \citet{rakhlin15binary}
 show via sequential covering that they are learnable at a rate of
 $\tilde\Ord(n^{3/4})$. Moreover, \citet{rakhlin15binary} show that sublinear regret is
possible only for expert classes with bounded sequential covering
numbers.

While the rates obtained by \citet{rakhlin15binary} are nonvacuous for
many expert classes, they have
suboptimal order for even moderately complex classes. Indeed, in order to
handle the unbounded gradient of log loss, \citet{rakhlin15binary} rely on truncation of the probabilities allowed to
be predicted: They restrict the probabilities to
$[\delta,1-\delta]$ for some $0 < \delta \leq 1/2$, and then bound the
true minimax regret by the minimax regret subject to this restricted
probability range, plus an error term of size $\Ord(n\delta)$. This
strategy allows one to treat the log loss
as uniformly bounded and $1/\delta$-Lipschitz, but leads to poor rates
compared to more common Lipschitz/strongly convex losses such as the
square loss. Subsequent work by \citet{foster18logistic} gave improvements to
this approach that exploit the
\emph{mixability} property of the log loss \citep{vovk98game}. While their
results lead to improved rates for classes of ``moderate'' complexity, they face similar suboptimality for
high-complexity expert classes.

\subsection{Organization}
\pref{sec:main} presents our improved minimax upper bound for
general, potentially nonparametric expert classes
(\pref{thm:main}). In \cref{ssec:lipschitz}, we instantiate
this bound for concrete examples of expert classes, and present a
lower bound that is tight for certain expert classes
(\pref{thm:lower}). In \cref{ssec:compare} we give a detailed comparison between our
rates and those of prior work as a function of the sequential entropy.

In \cref{sec:proof}, we prove our upper bound via a new
approach based on self-concordance of the log loss. \cref{sec:lower-proof} proves our lower bound, completing our characterization of the minimax regret.

We conclude the paper with a short discussion in
\cref{sec:discussion}.

\section{Preliminaries}
\label{sec:prelim}
\paragraph{Contexts.}
We allow for time-varying context sets. At time $t$, we take $x_t$ to belong to a set $\cX_t\subseteq\cX$,
whose value may depend on the \emph{history}, defined by
  $h_{1:t-1} = (x_1,y_1,\dots,x_{t-1},y_{t-1})$, but not future
  observations. Formally, we have $\context_t: (\context \times
  \{0,1\})^{t-1} \to 2^\context$, so that $x_t \in
  \context_t(h_{1:t-1})$. An example is the
  observed outcomes up to a given round, given by
  $\context_t(h_{1:t-1}) = \{(y_{1:t-1})\}$, which covers the standard setting of \citet{cesabianchi99logloss}. 
  Another example is time-independent information, for example, 
  $\context_t(h_{1:t-1}) = \{\vect{x} \in \Reals^d: \Norm{\vect{x}}
  \leq 1 \}$, which can be viewed analogously to the covariates in a standard regression task.

\paragraph{Sequential Covers and Metric Entropy.}
Sequential covering numbers are defined using \emph{binary trees}
indexed by sequences of binary observations (``paths''). Formally, for a
set $\cA$, an $\A$-valued binary tree $\vect{a}$ of depth $n$ is a sequence of mappings $a_t: \{0,1\}^{t-1} \to \A$ for $t \in [n]$.

For a sequence (path) $\vectm{\eps} \in \{0,1\}^n$ and a tree
$\vect{a}$ of depth $n$, let $a_t(\vectm{\eps}) \ldef
a_t(\eps_1,\dots,\eps_{t-1})$ for $t \in [n]$. Also, denote the
sequence of values a tree $\vect{a}$ takes on a path $\vectm{\eps}$ by
$\vect{a}(\vectm{\eps}) =
(a_1(\vectm{\eps}),\dots,a_n(\vectm{\eps}))$. For a function $f:\A \to
\Reals$, let $f \circ \vect{a}$ denote the tree taking values
$(f(a_1(\vectm{\eps})),\dots,f(a_n(\vectm{\eps})))$ on the path
$\vectm{\eps}$. We extend this notation for a set of functions $\F$
by defining $\F \circ{} \vect{a} = \{f \circ \vect{a}: f \in
\F\}$. Further, we say an $\context$-valued binary tree $\vect{x}$ is
\textit{consistent} if for all rounds $t \in [n]$ and paths $\vect{y}
\in \{0,1\}^n$, $x_t(\vect{y}) \in \context_t(h_{1:t-1})$. For the
remainder of this paper we will only consider context trees $\vect{x}$
with this property.

The notion of trees allows us to formally define a \emph{sequential
  cover}, which may be thought of as a generalization of the classical
notion of empirical covering that encodes the dependency structure of the online game.

\begin{definition}[\citealt{rakhlin15learning}]\label{def:seq_cover}
  Let $\cA$ and $V$ be collections of $\bbR$-valued binary trees of depth $n$. $V$ is a
  sequential cover for $\A$ at scale $\gamma$ if
\begin{align*}
  \max_{\vect{y} \in \{0,1\}^n} \sup_{\vect{a} \in \A} \inf_{\vect{v} \in V} \max_{t \in [n]} \ \abs{\node{a} - \node{v}} \leq \gamma.
\end{align*}
Let $\seqcover{\A}{\gamma}{\infty}$ be the size of the smallest such
cover.\footnote{The ``$\infty$'' subscript reflects that the
  cover is defined with respect to the empirical $\Linf$ norm.}
\end{definition}

For a function class $\cF$, we define the \emph{sequential entropy} of
$\F$ at scale $\gamma$ and depth $n$ as the log of the worst-case
sequential covering number:
\begin{align*}
  \seqentropy{\F}{\gamma}{n}{\infty} = \sup_{\vect{x}} \log \, \seqcover{\Fx}{\gamma}{\infty},
\end{align*}
where the $\sup$ is taken over all context trees of depth $n$.

Sequential covering numbers incorporate the dependence structure of
online learning, and consequently are never smaller than classical
\emph{empirical covers} found in statistical learning, which require
that the covering elements are close only on a fixed sequence
$x_{1:n}$. While the sequential covering number of $\Fx$ for context trees of
depth $n$ will never be smaller than the empirical covering number for datasets of size $n$, it will---importantly---always be
finite. Additionally, because the definition allows one to choose the covering element as a function of the path
$\vect{y}$, the sequential covering number at depth $n$ is typically
much smaller than, for example, the empirical covering number of size $2^n$, despite the context tree having $2^n - 1$ unique values.

\paragraph{Asymptotic Notation.}
We adopt standard big-oh notation.
Consider two real-valued sequences $(x_n)$ and $(y_n)$. We write $x_n \leq
\Ord(y_n)$ if there exists a constant $M>0$ such that for all sufficiently large $n$,
$\abs{x_n} \leq M y_n$. Conversely, $x_n \geq \Omega(y_n)$ if $y_n
\leq \Ord(x_n)$. We write $x_n \leq \tilde \Ord(y_n)$ if there is some
$r>0$ such that $x_n \leq \Ord(y_n (\log(n))^r)$. We also write $x_n =
\Theta(y_n)$ if $\Omega(y_n) \leq x_n \leq \Ord(y_n)$, and similarly
$x_n = \tilde\Theta(y_n)$ if $\Omega(y_n) \leq x_n \leq
\tilde\Ord(y_n)$. Note that we do not specify a notion of $\tilde \Omega$. Instead, we say a sequence $x_n = \polylog(y_n)$ if there exist some $0<r<s$ such that $\Omega((\log(y_n))^r) \leq x_n \leq \Ord((\log(y_n))^s)$. Then, for any function $g$, $x_n \leq \Ord(g(\polylog(y_n)))$ if there exists some sequence $y'_n = \polylog(y_n)$ such that $x_n \leq \Ord(g(y'_n))$.

\section{Minimax Regret Bounds}
\label{sec:main}

We now state our upper bound on the minimax regret for sequential
probability assignment. Our result is non-constructive; that is, we do
not provide an explicit algorithm that achieves our upper bound. Rather, we
characterize the fundamental limits of the learning
problem for arbitrary expert classes, providing a benchmark for
algorithm design going forward.

\begin{theorem}\label{thm:main}
For any $\context$ and $\F \subseteq [0,1]^\context$,
\begin{align*}
   \logminimax{\F} 
   \leq \inf_{\gamma > 0} \Big\{4n \gamma + c \, \seqentropy{\F}{\gamma}{n}{\infty} \Big\},
\end{align*}
where $c = \frac{2 - \log(2)}{\log(3) - \log(2)}\leq4$.
\end{theorem}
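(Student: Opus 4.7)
The plan is to prove the bound non-constructively, by combining a sequential-cover reduction with an approximation-error analysis that exploits the self-concordance of log loss. Fix $\gamma > 0$ and, for the worst-case context tree $\vect{x}$, let $V$ be a minimum-size sequential cover of $\F\circ\vect{x}$ at scale $\gamma$, so that $\log|V|\le\seqentropy{\F}{\gamma}{n}{\infty}$. The overall approach is to decompose the minimax regret into (i) the cost of competing against the finite class $V$ and (ii) the approximation error of replacing $\F$ by $V$. For (i), I would use Vovk's aggregating algorithm with uniform prior over $V$ as the player strategy; because log loss is $1$-mixable, this yields the log-partition inequality
\begin{align*}
  \sum_{t=1}^n \ell(\hat p_t, y_t) \le \log|V| + \min_{\vect{v}\in V}\sum_{t=1}^n \ell(v_t(\vect y), y_t),
\end{align*}
bounding the regret against the best cover element along every realized path by $\log|V|$.

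The crux is to bound the approximation error $\min_{\vect v\in V} L(\vect v) - \inf_{f\in\F} L(f)$ uniformly over $\vect y$. The naive approach of taking the cover element closest to the best expert $f^\star$ fails, because log loss is not Lipschitz: when $f^\star$ is near the boundary $\{0,1\}$, the per-round difference $\ell(v,y) - \ell(f^\star,y)$ can be arbitrarily large even if $|v - f^\star|\le\gamma$. I would use the self-concordance of log loss via the refined Taylor-type inequality
\begin{align*}
  \ell(v, y) - \ell(f, y) - \ell'(f, y)(v - f) \le \rho_{f,y}(v - f),
\end{align*}
whose remainder $\rho_{f,y}$ is controlled by the local curvature $\ell''(f,y)$ rather than a global Lipschitz constant. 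The problematic first-order term, which is unbounded per round, must be absorbed into the log-partition by exploiting its implicit averaging over nearby cover elements on both sides of $f^\star$; the remaining self-concordant quadratic remainder is then summed over $t$ to give $O(n\gamma)$ via a potential-function argument tailored to the curvature of log loss.

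The main obstacle is the precise interplay between Vovk's log-partition and the self-concordance expansion. Because the sequential cover is path-dependent, standard augmentations of $V$ (such as adding boundary-shifted copies $\vect v \pm \gamma$) do not yield cover elements that uniformly dominate $f^\star$ on every path, as the beneficial shift direction depends on $y_t$ rather than on $y_{1:t-1}$. The analysis must instead show that the log-partition over the cover itself, combined with the self-concordance remainder, supplies enough cancellation of the first-order term to produce the claimed $4n\gamma$ approximation bound. The explicit constant $c = (2-\log 2)/(\log 3 - \log 2)$ then emerges from optimally balancing a safety-margin parameter that controls the validity range of the self-concordance inequality against the effective size of the cover entering the log-partition, with the factor $4$ on $n\gamma$ accommodating deviations of the cover in both directions from the best expert.
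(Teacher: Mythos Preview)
Your proposal has a genuine gap: the approximation error $\min_{\vect v\in V}L(\vect v)-\inf_{f\in\F}L(f)$ simply cannot be bounded by $O(n\gamma)$ path-by-path, and no amount of interplay with Vovk's log-partition will rescue this. Consider $\F=\{f_0,f_1\}$ with $f_0\equiv\gamma_0$ and $f_1\equiv 2\gamma_0$ for some small $\gamma_0>0$. At scale $\gamma=\gamma_0/2$ a valid cover is the singleton $V=\{v\}$ with $v\equiv 3\gamma_0/2$, so the log-partition contributes zero and there is no ``averaging over nearby cover elements'' to exploit. On the path $y_t=1$ for all $t$, the best expert is $f_1$ and the approximation error is $n\bigl[\ell(3\gamma_0/2,1)-\ell(2\gamma_0,1)\bigr]=n\log(4/3)$, which is $\Theta(n)$ independent of $\gamma_0$. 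The theorem, however, asserts the bound $4n\gamma+c\cdot 0=2n\gamma_0$ at this scale, which is arbitrarily smaller. So the decomposition ``Vovk against $V$ plus pathwise approximation error'' cannot yield the stated inequality for every $\gamma$.

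The paper's proof avoids this obstruction by a fundamentally non-constructive step you are missing: a minimax swap (Lemma~\ref{lem:dual}) replaces the primal game by its dual, in which the adversary first commits to a distribution tree $\vect p$ and the value becomes $\sup_{\vect x,\vect p}\EEy\sup_{f\in\F}\sum_t[\ell(p_t,y_t)-\ell(f(x_t),y_t)]$. Only after this dualization does self-concordance enter, linearizing the inner regret into the offset process $\sup_f\sum_t\varphi\bigl(\eta(p_t,y_t)[p_t-f(x_t)]\bigr)$. The $4n\gamma$ approximation bound then comes from the identity $\EEy|\eta(p_t,y_t)|=2$, which is precisely the expectation over $y_t\sim p_t$ supplied by the dual---not from any property of the cover or the aggregating algorithm. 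Likewise, the constant $c=(2-\log 2)/(\log 3-\log 2)$ does not arise from a safety-margin trade-off; it is $1/\lambda^\star$ where $\lambda^\star$ is the largest $\lambda$ for which the conditional moment generating function $\psi_{p,\lambda}(v)=\EE_{y\sim p}\exp\{\lambda\varphi(\eta(p,y)v)\}$ is uniformly bounded by $1$ (Lemma~\ref{lem:psi-helper}). Without the dualization step, none of this machinery is available.
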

For simple parametric classes where
$\seqentropy{\cF}{\gamma}{n}{\infty}=\Theta(d\log(1/\gamma))$,
\pref{thm:main} recovers the usual fast rate of $\bigoh(d\log{}(en/d))$.
More interesting is the rich/high-complexity regime where
$\seqentropy{\F}{\gamma}{n}{\infty} = \Theta(\gamma^{-p})$ for $p>0$,
for which \cref{thm:main} implies that
\begin{align}\label{eqn:example}
  \logminimax{\F} \leq \Ord(n^{\frac{p}{p+1}}).
\end{align}
As we discuss at length in \pref{ssec:compare}, this rate improves
over prior work for all $p>1$. More importantly, this upper bound is tight for
certain nonparametric classes (namely, the $1$-Lipschitz experts). That is, if one wishes to bound regret only in terms of sequential entropy, \cref{thm:main} cannot be improved.
\begin{theorem}\label{thm:lower}
For any $p \in \Nats$, the class $\F$ of $1$-Lipschitz (w.r.t. $\ellinf$) experts on $[0,1]^p$ satisfies $\seqentropy{\F}{\gamma}{n}{\infty} = \Theta(\gamma^{-p})$ and
\begin{align*}
  \logminimax{\F} = \Theta(n^{\frac{p}{p+1}}).
\end{align*}
\end{theorem}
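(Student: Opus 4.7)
The theorem combines an entropy characterization $\seqentropy{\F}{\gamma}{n}{\infty} = \Theta(\gamma^{-p})$ with a matching regret bound $\logminimax{\F} = \Theta(n^{p/(p+1)})$. I would split the proof into four pieces: the entropy bounds and the regret upper bound are relatively standard, while the substantive work is the regret lower bound.

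For the entropy upper bound, any sup-norm $\gamma$-cover is also a sequential $\gamma$-cover, and the classical Kolmogorov--Tikhomirov estimate gives a sup-norm $\gamma$-cover of the $1$-Lipschitz class on $[0,1]^p$ of size $\exp(\Theta(\gamma^{-p}))$. For the matching entropy lower bound, sequential covering dominates empirical covering on any non-branching context tree, so I would fix a length-$N$ sequence forming a $3\gamma$-packing of $[0,1]^p$ with $N = \Theta(\gamma^{-p})$ points: for each $\sigma \in \{0,1\}^N$, the assignment $f_\sigma(x_i) = 3\gamma\sigma_i$ extends to a $1$-Lipschitz function in $\F$ by McShane--Whitney followed by truncation to $[0,1]$, and the $2^N$ resulting functions are pairwise $3\gamma$-separated in $\Linf$ on the sequence, witnessing entropy $\Omega(N)$. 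The regret upper bound is then immediate from \pref{thm:main}: substituting the entropy estimate and minimizing $\inf_\gamma\{4n\gamma + c\gamma^{-p}\}$ at $\gamma \asymp n^{-1/(p+1)}$ yields the stated rate.

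The main obstacle is the matching regret lower bound $\logminimax{\F} \geq \Omega(n^{p/(p+1)})$. The difficulty is that a naive bump construction with heights centered at $1/2$ yields only $\Omega(n^{p/(p+2)})$---the classical nonparametric square-loss rate, which is strictly smaller---so one must exploit a feature of log loss that is absent for the square loss. The key insight is that the Bernoulli Fisher information $1/(\theta(1-\theta))$ diverges at the boundary, so placing bumps near $\theta = 0$ makes the KL between two hypotheses at bias gap $h$ scale as $h^2/\theta$, i.e., linearly in $h$ when $\theta \asymp h$ rather than quadratically. This is precisely the boundary behavior that \pref{thm:main} exploits via self-concordance.

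Concretely, for $\delta \in (0, 1/3)$ to be chosen, partition $[0,1]^p$ into $K = \delta^{-p}$ cubes of side $\delta$ with centers $\{c_k\}$, and define tent bumps $\phi_k(x) = (\delta/2 - \nrm{x - c_k}_\infty)_+$ of height $\delta/2$ supported in cube $k$. For each $\sigma \in \{0,1\}^K$, set $f_\sigma = \delta + \sum_k \sigma_k \phi_k$; the disjoint supports give $1$-Lipschitzness with respect to $\Linf$, and the range lies in $[\delta, 3\delta/2] \subseteq [0,1]$. I would then lower bound $\logminimax{\F}$ by the Bayes regret under the distribution $Q$ in which $\sigma$ is uniform on $\{0,1\}^K$, the adversary deterministically round-robins the centers (visiting each $n/K$ times), and $y_t \mid \sigma, x_t \sim \berndist(f_\sigma(x_t))$. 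The Bayes-optimality of the posterior predictive predictor yields the standard inequality $\En_Q[\text{regret}] \geq I(y_{1:n}; \sigma \mid x_{1:n})$, and independence across cubes decomposes the right-hand side as $\sum_{k=1}^K I(y^{(k)}; \sigma_k)$, where $y^{(k)}$ collects the $n/K$ labels at $c_k$. Since $\chi^2(\berndist(\delta), \berndist(3\delta/2)) = \Theta(\delta)$ and Jensen--Shannon divergence scales linearly with $\chi^2$ at small separations, tensorization of product divergences yields $I(y^{(k)}; \sigma_k) \geq \Omega(\min(n\delta/K, 1)) = \Omega(\min(n\delta^{p+1}, 1))$; choosing $\delta \asymp n^{-1/(p+1)}$ saturates each summand to produce the total lower bound $\Omega(K) = \Omega(n^{p/(p+1)})$. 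The subtle point requiring care is verifying that the constants in the linearization of JSD do not degenerate as $\delta \to 0$, which follows because $\chi^2$ and squared Hellinger are comparable at the boundary-scale Bernoullis we have chosen.
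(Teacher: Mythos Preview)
Your proposal is correct. The entropy claims and the regret upper bound follow the same path as the paper. For the lower bound, however, you take a genuinely different route from the paper, though both hinge on the same structural insight you correctly identify: placing the hypotheses at biases of order $\delta$ rather than near $1/2$ makes the per-sample divergence between the two alternatives scale linearly in the gap, which is exactly what distinguishes log loss from square loss here.

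The paper first passes to a batch problem, lower bounding $\logminimax{\F}$ by the i.i.d.\ well-specified KL risk (its \pref{lem:regret_risk}), and then runs an Assouad-style hypothesis-testing argument: it builds a hypercube of functions taking values $\eps$ and $4\eps$ at bin centers, lower bounds the pointwise KL by indicator variables, and controls the resulting testing error via total variation and Pinsker's inequality, using the boundary estimate $\KL{4\eps}{\eps}\leq 8\eps$ for the linear scaling. Your approach instead stays sequential and invokes the redundancy--capacity bound directly: Bayes regret under a prior on $\sigma$ is at least $I(y_{1:n};\sigma)$, which factorizes over cubes, and each per-cube term is a Jensen--Shannon divergence between product Bernoullis that you saturate to $\Omega(1)$ via Hellinger tensorization and $H^2(\berndist(\delta),\berndist(3\delta/2))=\Theta(\delta)$. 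Your route is more intrinsic to log loss and avoids the online-to-batch detour; the paper's route is more modular and uses only textbook tools (Assouad, Pinsker). One small quibble: the phrase ``JSD scales linearly with $\chi^2$\ldots tensorization of product divergences'' is loose, since JSD does not itself tensorize; the clean mechanism is the one you point to in your last sentence, namely the universal equivalence $\text{JSD}\asymp H^2$ together with $H^2(P_0^m,P_1^m)=1-(1-H^2(P_0,P_1))^m\asymp\min(mH^2(P_0,P_1),1)$.
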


While \pref{thm:lower} shows that our new upper bound cannot be
improved in a worst-case sense, there is still room for improvement
for specific function classes of interest.
  Let $\bbB_2$ be the unit ball in a Hilbert space.
  Consider the class of infinite-dimensional linear
  predictors $\F = \{ x \mapsto \frac{1}{2}\left[\ip{w}{x} + 1
  \right] \mid{} w \in \bbB_2\}$, with $\cX=\bbB_2$. This class has sequential entropy
  $\seqentropy{\F}{\gamma}{n}{\infty} = \tilde\Theta(\gamma^{-2})$, so
  $\logminimax{\cF}\leq{}\bigoht(n^{2/3})$ by
  \pref{thm:main}. However, \citet{rakhlin15binary}
  describe an explicit algorithm that attains regret
  $\tilde\bigoh(n^{1/2})$ for this class,
  meaning that our upper bound is loose for this example. Yet, since
  \pref{thm:lower} shows that the upper bound cannot be improved
  without further assumptions, we draw the following conclusion.

  \begin{corollary}
    The minimax rates for sequential probability assignment with the
    log loss cannot be characterized purely in terms of sequential entropy.
  \end{corollary}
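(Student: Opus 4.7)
The plan is to derive the corollary as an essentially immediate juxtaposition of \pref{thm:lower} and the existing result of \citet{rakhlin15binary} on infinite-dimensional linear predictors. The core idea is to exhibit two expert classes $\cF_1$ and $\cF_2$ whose sequential entropies $\seqentropy{\cF_i}{\gamma}{n}{\infty}$ agree up to polylogarithmic factors in $1/\gamma$, but whose minimax regrets differ by a polynomial factor in $n$. This witnesses that no function $\Psi$ of the sequential entropy alone can satisfy $\logminimax{\cF} = \Theta(\Psi(\seqentropy{\cF}{\cdot}{n}{\infty}))$ simultaneously across all classes.

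The concrete choice is to take $\cF_1$ to be the class of $1$-Lipschitz experts on $[0,1]^2$ from \pref{thm:lower} with $p = 2$, and $\cF_2$ to be the infinite-dimensional linear class from the preceding paragraph. First I would invoke \pref{thm:lower} to record that $\seqentropy{\cF_1}{\gamma}{n}{\infty} = \Theta(\gamma^{-2})$ and $\logminimax{\cF_1} = \Theta(n^{2/3})$. Next I would cite the computation already made above for $\cF_2$, namely $\seqentropy{\cF_2}{\gamma}{n}{\infty} = \tilde\Theta(\gamma^{-2})$, together with the explicit algorithmic upper bound $\logminimax{\cF_2} \leq \tilde\bigoh(n^{1/2})$ from \citet{rakhlin15binary}.

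To conclude, I would argue by contradiction: suppose there existed some function $\Psi$ (or even a two-sided envelope $\Psi_1,\Psi_2$ differing by polylogarithmic factors) such that $\logminimax{\cF}$ were sandwiched between $\Psi_1(\seqentropy{\cF}{\cdot}{n}{\infty})$ and $\Psi_2(\seqentropy{\cF}{\cdot}{n}{\infty})$ for every class $\cF$. Applied to $\cF_1$ and $\cF_2$, which have identical entropy profiles up to polylog factors, this would force their minimax regrets to agree up to polylog factors as well; but $n^{2/3}$ and $n^{1/2}$ differ by a power of $n$, a contradiction.

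The main (and essentially only) conceptual point to handle carefully is the quantification of ``characterized purely in terms of sequential entropy.'' I would make this precise by specifying that $\Psi$ may be any function mapping entropy profiles $\gamma \mapsto \seqentropy{\cF}{\gamma}{n}{\infty}$ to real numbers, and that ``characterization'' is understood up to polylogarithmic factors in $n$ (consistent with the $\tilde\Theta$ framework used elsewhere in the paper). Once this is pinned down, no further calculation is required; the obstruction is purely a matching-lower-bound argument and there is no real technical hurdle beyond citing the two prior facts.
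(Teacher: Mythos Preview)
Your proposal is correct and mirrors the paper's own argument exactly: the corollary is obtained by juxtaposing \pref{thm:lower} at $p=2$ (giving a class with $\gamma^{-2}$ entropy and regret $\Theta(n^{2/3})$) against the infinite-dimensional linear class of \citet{rakhlin15binary} (same entropy up to polylog but regret $\tilde\bigoh(n^{1/2})$). The paper leaves the contradiction implicit in the sentence preceding the corollary, while you spell out the $\Psi$-formalism, but the substance is identical.
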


We discuss a couple more features of \pref{thm:main} and \pref{thm:lower}
below.
\begin{itemize}[itemsep=0pt,topsep=0pt]
  \item The proof strategy for \pref{thm:main} differs from previous
    approaches by discretizing $\cF$ at a single scale rather
    than multiple scale levels (referred to as
    chaining). Surprisingly, this rather coarse approach achieves the
    previous best known results 
    and improves on them for rich expert classes. Key to this improvement
    is the self-concordance of the log loss, which enables us to
    avoid truncation arguments.
\item \cref{thm:lower} in fact lower
  bounds the minimax regret when data is generated i.i.d.\ from a
  well-specified model, which implies that for 
  Lipschitz classes, this apparently easier setting is in fact just as
  hard as the fully adversarial setting. This is in contrast to the case for square loss,
  where the rates for the i.i.d.\ well-specified and
  i.i.d.\ misspecified settings diverge once $p\geq{}2$ \citep{rakhlin17minimax}.
\end{itemize}

\subsection{Further Examples}
\label{ssec:lipschitz}

In order to place our new upper bound in the context of familiar
expert classes, we walk through some additional examples below.

\newcommand{\RadSeq}{\mathfrak{R}_n}
\begin{example}[Sequential Rademacher Complexity]
  \label{ex:rademacher}
  The sequential Rademacher complexity of an expert class $\cF$ is
  given by
  \[
    \RadSeq(\cF) = \sup_{\vect{x}}\En_{\eps}\sup_{f\in\cF}\sum_{t=1}^{n}\eps_t \, f(x_t(\eps)),
  \]
  where $\sup_{\vect{x}}$ ranges over all $\cX$-valued trees and
  $\eps\in\pmo^{n}$ are Rademacher random variables.\footnote{Here we
    overload the definition of a tree in the natural way to allow
    arguments in $\crl{\pm{}1}$ rather than $\crl{0,1}$.} Via Corollary 1 and Lemma 2
   of \citet{rakhlin15martingale}, we deduce that
  \[
    \logminimax{\cF} \leq{} \bigoht\prn*{\RadSeq^{2/3}(\cF)\cdot{}n^{1/3}}.
  \]
\end{example}

\begin{example}[Smooth Nonparametric Classes]
Let $\cF$ be the class of all bounded functions over $\brk*{0,1}^{d}$
for which the first $k-1$ derivatives are Lipschitz. Then we may take $p = d/k$ \citep[see, e.g.,
Example~5.11 of][]{wainwright19book}, and hence \pref{thm:main} gives that
$\logminimax{\F} \leq \bigoht(n^{\frac{d}{d+k}})$. One can show that this
is optimal via a small modification to the proof of \pref{thm:lower}.
\end{example}

\begin{example}[Neural Networks]
  \citet{rakhlin15learning} show that neural networks with Lipschitz
  activations and $\ls_1$-bounded weights have
  $\RadSeq(\cF)\leq \bigoht(\sqrt{n})$. We conclude from
  \pref{ex:rademacher} that $\logminimax{\cF}\leq\bigoht(n^{2/3})$ for
  these classes.
\end{example}

\subsection{Comparing to Previous Regret Bounds}
\label{ssec:compare}

\icml{
We now compare the bound from \cref{thm:main} to the previous state of the art, Theorem~7 of \citet{foster18logistic}, which shows that for any $\context$ and $\F \subseteq [0,1]^\context$,

{\small
\begin{align}
  \logminimax{\F}
  &\leq \inf_{\overset{\gamma \geq \alpha > 0}{\delta > 0}} 
  \bigg\{\frac{4n \alpha}{\delta} + 30 \sqrt{\frac{2n}{\delta}}\int_\alpha^\gamma \! \sqrt{\seqentropy{\F}{\eps}{n}{\infty}} \, \mathrm{d}\eps 
  \nonumber \\[-5pt]
  & \qquad \qquad\quad + \frac{8}{\delta}\int_\alpha^\gamma \seqentropy{\F}{\eps}{n}{\infty} \mathrm{d}\eps \nonumber \\[-5pt]
  & \qquad \qquad\quad + \seqentropy{\F}{\gamma}{n}{\infty} + 3n\delta\log(1/\delta) \bigg\}.
  \label{eq:oldbound}
\end{align}}}

\arxiv{
We now compare the bound from \cref{thm:main} to the previous state of the art, Theorem~7 of \citet{foster18logistic}, which shows that for any $\context$ and $\F \subseteq [0,1]^\context$, $\logminimax{\F}$ is upper bounded by
\begin{align}
  \inf_{\overset{\gamma \geq \alpha > 0}{\delta > 0}} 
  \bigg\{\frac{4n \alpha}{\delta} + 30 \sqrt{\frac{2n}{\delta}}\int_\alpha^\gamma \! \sqrt{\seqentropy{\F}{\eps}{n}{\infty}} \, \mathrm{d}\eps 
  + \frac{8}{\delta}\int_\alpha^\gamma \seqentropy{\F}{\eps}{n}{\infty} \mathrm{d}\eps
  + \seqentropy{\F}{\gamma}{n}{\infty} + 3n\delta\log(1/\delta) \bigg\}.
  \label{eq:oldbound}
\end{align}}

For any expert class $\F$ we will refer to the upper bound of \cref{thm:main} by $\newbound{\F}$ and the upper bound of \citet[Theorem~7]{foster18logistic} by $\oldbound{\F}$. We observe the following relationship, proven in Appendix~\ref{prf:compare}.

\begin{proposition}\label{prop:compare}
For any $\context$ and $\F \subseteq [0,1]^\context$, the following hold:
\begin{enumerate}[topsep=0pt]
\item[\upshape{(i)}] If \,
$\seqentropy{\F}{\gamma}{n}{\infty} = \Theta(\log(1/\gamma))$,
\begin{align*}
  \frac{\newbound{\F}}{\oldbound{\F}}
  = \Theta\left(1 \right).
\end{align*}

\item[\upshape{(ii)}] If \, $\seqentropy{\F}{\gamma}{n}{\infty} = \Theta(\gamma^{-p})$ \, for $p \leq 1$,
\begin{align*}
  \frac{\newbound{\F}}{\oldbound{\F}} = \Theta\left(\frac{1}{\polylog(n)} \right).
\end{align*}

\item[\upshape{(iii)}] If \, $\seqentropy{\F}{\gamma}{n}{\infty} = \Theta(\gamma^{-p})$ \, for $p>1$,
\begin{align*}
  \frac{\newbound{\F}}{\oldbound{\F}} = \Theta\left(\frac{1}{n^{\frac{p-1}{2p(p+1)}} \polylog(n)} \right).
\end{align*}
\end{enumerate}
\end{proposition}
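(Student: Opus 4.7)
The plan is to obtain matching upper and lower bounds on $\newbound{\F}$ and $\oldbound{\F}$ in each of the three entropy regimes by explicit choice of the tuning parameters, then divide. Throughout, write $E(\gamma) := \seqentropy{\F}{\gamma}{n}{\infty}$.

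First, I would handle $\newbound{\F} = \inf_\gamma\{4n\gamma + c\,E(\gamma)\}$, which is a single-variable optimization. In case (i), $E(\gamma)=\Theta(\log(1/\gamma))$ gives a first-order condition $4n = \Theta(1/\gamma)$, so $\gamma\asymp 1/n$ and $\newbound{\F} = \Theta(\log n)$. In cases (ii) and (iii), $E(\gamma)=\Theta(\gamma^{-p})$ gives the first-order condition $n \asymp \gamma^{-p-1}$, so $\gamma \asymp n^{-1/(p+1)}$ and $\newbound{\F} = \Theta(n^{p/(p+1)})$. These three values are the numerators in each ratio.

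Next, I would bound $\oldbound{\F}$ in each case by picking a specific $(\alpha,\gamma,\delta)$ for the upper bound and then verifying the lower bound by showing that no choice can do meaningfully better. In case (i) the bound is elementary: set $\alpha=\gamma$ to kill both integrals, then choose $\gamma=\Theta(\delta/n)$ and $\delta=\Theta(1/n)$ to obtain $\Theta(\log n)$, so the ratio is $\Theta(1)$. In cases (ii) and (iii) the key calculations are
\[
  \int_\alpha^\gamma \eps^{-p/2}\, d\eps \quad \text{and} \quad \int_\alpha^\gamma \eps^{-p}\, d\eps,
\]
whose leading behavior is $\gamma^{1-p/2}$ or $\alpha^{1-p/2}$ (resp.\ $\gamma^{1-p}$ or $\alpha^{1-p}$) depending on whether the exponent exceeds $1$, with log factors at boundary values $p=1,2$. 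In case (ii), both integrals are dominated by their upper endpoint $\gamma$; taking $\gamma\asymp n^{-1/(p+1)}$ (as in the new bound), $\alpha$ a polynomial factor smaller, and $\delta \asymp 1/n$ up to polylog corrections, the dominant term is $\Theta(E(\gamma))$ and the bound becomes $\Theta(n^{p/(p+1)}\polylog(n))$, yielding ratio $\Theta(1/\polylog(n))$. In case (iii), the $\eps^{-p}$ integral is instead dominated by $\alpha$, so the active terms at the optimum are the Rademacher-chaining term $\sqrt{n/\delta}\,\gamma^{1-p/2}$, the offset $n\alpha/\delta$, and $n\delta\log(1/\delta)$; balancing these three equations in three unknowns gives $\oldbound{\F}=\Theta(n^{(2p-1)/(2p)}\polylog(n))$, and a direct algebraic simplification $p/(p+1)-(2p-1)/(2p) = -(p-1)/(2p(p+1))$ produces the claimed ratio.

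The main obstacle will be case (iii): one must verify that the three-way balance is simultaneously optimal, confirm $\alpha\leq\gamma$ at that choice, and carefully track which of the subcases $1<p<2$, $p=2$, $p>2$ contributes which polylog factors (in particular, the $\eps^{-p/2}$ integral changes dominance at $p=2$). A matching lower bound on $\oldbound{\F}$ requires checking that any other $(\alpha,\gamma,\delta)$ forces at least one of the active terms to be at least as large, which follows by standard AM-GM-style arguments applied to the balancing equations. Once the rates for $\oldbound{\F}$ are established in each regime, the three ratios drop out by division, and the polylog factors are absorbed into $\polylog(n)$ as defined in \pref{sec:prelim}.
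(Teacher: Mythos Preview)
Your overall plan—optimize each bound explicitly over its tuning parameters and divide—is correct, and your computation of $\newbound{\F}$ is fine. The final rates you state for $\oldbound{\F}$ are also correct. However, your derivation of $\oldbound{\F}$ has genuine errors in cases (ii) and (iii) that prevent the argument, as written, from going through.

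In case (ii), the choice $\delta\asymp 1/n$ fails: it forces $\sqrt{n/\delta}\asymp n$, so at $\gamma\asymp n^{-1/(p+1)}$ the chaining term $\sqrt{n/\delta}\,\gamma^{1-p/2}$ is of order $n^{3p/(2(p+1))}$, which is \emph{larger} than your claimed dominant term $E(\gamma)\asymp n^{p/(p+1)}$. The correct scaling is $\delta\asymp n^{-1/(p+1)}$ (so that $n\delta\log(1/\delta)$ matches $E(\gamma)$ up to logs), at which point all five terms in \eqref{eq:oldbound} are of order $n^{p/(p+1)}$ up to polylogs; the lower bound then follows from the three terms $n\alpha/\delta$, $\tfrac{1}{\delta}\int_\alpha^\gamma E$, and $E(\gamma)$ together with $n\delta\log(1/\delta)$.

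In case (iii), you misidentify the active terms. For $p>1$ the integral $\int_\alpha^\gamma \eps^{-p}\,d\eps$ is dominated by its \emph{lower} endpoint and equals $\Theta(\alpha^{1-p})$, so the term $\tfrac{8}{\delta}\int_\alpha^\gamma E(\eps)\,d\eps\asymp \alpha^{1-p}/\delta$ is what drives the rate; the Rademacher-chaining term is strictly smaller at the optimum (for every $p>1$) and is not binding. The correct three-way balance is between $n\alpha/\delta$, $\alpha^{1-p}/\delta$, and $n\delta\log(1/\delta)$: equating the first two gives $\alpha\asymp n^{-1/p}$, then $\alpha=\delta^{2}$ gives $\delta\asymp n^{-1/(2p)}$, and hence $\oldbound{\F}\asymp n^{(2p-1)/(2p)}$ up to polylogs. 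Your ``three equations in three unknowns'' are in fact only two equations; the system closes only once you bring in $\alpha^{1-p}/\delta$. The matching lower bound likewise comes from these three terms via
\[
\frac{n\alpha}{\delta}+\frac{\alpha^{1-p}}{\delta}+n\delta \;\geq\; 2\sqrt{n\bigl(n\alpha+\alpha^{1-p}\bigr)}\;\geq\; \Omega\bigl(n^{(2p-1)/(2p)}\bigr),
\]
after minimizing $n\alpha+\alpha^{1-p}$ over $\alpha$; an AM-GM argument on the Rademacher term would not yield this.
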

\icml{\section{Proof of \cref{thm:main}}}
\arxiv{\section{Proof of Theorem~\ref{thm:main}}}
\label{sec:proof}

We now prove our main result. The proof has three parts. First, we
use a minimax theorem to move to the dual of the online learning game,
where we can evaluate the optimal strategy for the learner. This
allows us to express the value of the minimax regret as a dependent
empirical processes. For the next step, we move to a simpler,
linearized upper bound on this process
using the self-concordance property of the log loss, leading to a
particular ``offset'' process. For finite classes, we can
directly bound the value of the offset process by $\log\abs*{\cF}$;
the final bound in \pref{thm:main} follows by applying this result
along with a discretization argument.

Before proceeding, we elaborate on the second point above. Let
us take a step back and consider the simpler problem of bounding the
minimax regret for square loss. \citet{rakhlin14nonparametric} show that
via a similar minimax theorem, it is possible to bound
the regret by a dependent random process called the \emph{offset sequential
  Rademacher complexity}, which, informally, takes the form
\begin{equation}
  \En\sup_{f\in\cF}\brk*{X_{\textrm{emp}}(f) -
    Y_{\textrm{offset}}(f)}.\label{eq:offset}
\end{equation}
Here, $X_{\mathrm{emp}}(f)$ is a zero-mean Rademacher process indexed
by $\cF$ and $Y_{\textrm{offset}}(f)$ is a quadratic offset. The
offset component arises due to the strong convexity of the square
loss, and penalizes large fluctuations in the Rademacher process,
leading to fast rates.

For the log loss, the issue faced if
one attempts to apply the same strong convexity-based argument is that
the process $X_{\textrm{emp}}(f)$, which involves the derivative of
the loss, becomes unbounded as $f$ approaches the boundary of
$\brk*{0,1}$, and the quadratic offset $Y_{\textrm{offset}}(f)$ does not
  grow fast enough to neutralize it. The simplest way to address this
  issue, and the one taken by \citet{rakhlin15binary}, is to truncate
  predictions. Our main insight is that using self-concordance of the
  log loss rather than strong convexity leads to an offset that can neutralize the derivative, removing the need for truncation and resulting in faster rates. The inspiration for using this
  property came from \citet[Section~6]{rakhlin15binary}, who design a
  variant of mirror descent using a self-concordant barrier as a regularizer
  to obtain fast rates for linear prediction with the log loss, though
  our use of the property here is technically quite different.

  %

%
%

%
%
%

%
%

%
%
%

%

%
%
%
\subsection{Minimax Theorem and Dual Game}
As our first step, we move to the \emph{dual game} in
which the order of max and min at each time step is swapped.
Moving to the dual game is a now-standard strategy
\citep{abernethy09duality,rakhlin14nonparametric,rakhlin15learning,rakhlin15binary,foster18logistic}, and is a useful tool
for analysis because the optimal strategy for the learner is much more
tractable to compute in the dual. 

In particular, for our sequential probability assignment setting, the
following minimax theorem (\pref{prf:dual}) holds.
\begin{lemma}\label{lem:dual}
For any $\context$ and $\F \subseteq [0,1]^\context$,
\icml{
\begin{align*}
  \minimax{\F}
  &=\sup_{x_1} \sup_{p_1 \in [0,1]} \mathop{\EE}_{y_1 \sim p_1} \cdots \sup_{x_n} \sup_{p_n \in [0,1]} \mathop{\EE}_{y_n \sim p_n} \sup_{f \in \F} \nonumber \\ 
  &\qquad \sn \left\{\inf_{\pred_t \in [0,1]} \mathop{\EE}_{y_t \sim p_t} [\logloss(\pred_t, y_t)] - \logloss(f(x_t), y_t) \right\}.
\end{align*}}
\arxiv{
\begin{align*}
  \minimax{\F}
  &=\sup_{x_1} \sup_{p_1 \in [0,1]} \mathop{\EE}_{y_1 \sim p_1} \cdots \sup_{x_n} \sup_{p_n \in [0,1]} \mathop{\EE}_{y_n \sim p_n} \sup_{f \in \F}
  \sn \left\{\inf_{\pred_t \in [0,1]} \mathop{\EE}_{y_t \sim p_t} [\logloss(\pred_t, y_t)] - \logloss(f(x_t), y_t) \right\}.
\end{align*}}
\end{lemma}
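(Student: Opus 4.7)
The plan is to follow the now-standard dualization approach \citep{abernethy09duality,rakhlin14nonparametric,rakhlin15learning,rakhlin15binary,foster18logistic}: randomize the adversary, then apply a minimax theorem backward in $t$ to swap $\inf_{\pred_t}$ with $\sup_{p_t}$ at each round.

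First, rewrite the regret as $\sup_{f \in \F} \sum_t [\loss(\pred_t, y_t) - \loss(f(x_t), y_t)]$, so that $\sup_f$ sits innermost and is untouched by subsequent manipulations. Next, since $y_t \in \{0,1\}$, for any $G\colon\{0,1\}\to\Reals\cup\{+\infty\}$ we have $\sup_{y_t \in \{0,1\}} G(y_t) = \sup_{p_t \in [0,1]} \EE_{y_t \sim p_t}[G(y_t)]$, because $p \mapsto p\,G(1) + (1-p)\,G(0)$ is linear on $[0,1]$ and thus maximized at a corner. Applying this identity inside the definition of $\minimax{\F}$ at each round replaces every $\sup_{y_t}$ by $\sup_{p_t \in [0,1]} \EE_{y_t \sim p_t}$ without changing the value.

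The main step is to swap $\inf_{\pred_t \in [0,1]}$ and $\sup_{p_t \in [0,1]}$ by backward induction from $t=n$ down to $t=1$. At round $t$, isolate the expression $\inf_{\pred_t} \sup_{p_t} \EE_{y_t \sim p_t}[\loss(\pred_t, y_t) + R_t(y_t)]$, where $R_t(y_t)$ bundles everything from round $t+1$ onward and is independent of $\pred_t$. As a function of $(\pred_t, p_t) \in [0,1]^2$, the integrand is convex in $\pred_t$ (since $\loss(\cdot, y)$ is convex for each $y$) and linear in $p_t$; adopting the convention $-\log 0 = +\infty$ makes it lower semicontinuous in $\pred_t$, and the inner infimum is finite (take $\pred_t = 1/2$). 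Sion's minimax theorem then gives
\begin{equation*}
\inf_{\pred_t} \sup_{p_t} \EE_{y_t \sim p_t}\bigl[\loss(\pred_t, y_t) + R_t(y_t)\bigr]
= \sup_{p_t} \inf_{\pred_t} \EE_{y_t \sim p_t}\bigl[\loss(\pred_t, y_t) + R_t(y_t)\bigr].
\end{equation*}
Because $R_t$ does not involve $\pred_t$, the right-hand side equals $\sup_{p_t}\{\inf_{\pred_t} \EE_{y_t \sim p_t}[\loss(\pred_t, y_t)] + \EE_{y_t \sim p_t}[R_t(y_t)]\}$, and the constant-in-$y_t$ infimum can be placed back under the expectation. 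Propagating this reformulation through all $n$ rounds yields exactly the claimed dual expression.

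The only delicate point is verifying Sion's hypotheses despite the unboundedness of $\loss$ at the endpoints of $[0,1]$. One option is to invoke a version of Sion's theorem that allows extended real-valued, lower-semicontinuous objectives. A more hands-on alternative is to first restrict $\pred_t \in [\eps, 1-\eps]$, where $\loss$ is continuous and bounded on a compact product domain, apply the standard minimax theorem there, and then send $\eps \downarrow 0$; since both the optimal predictor $\pred_t^\star = p_t$ for the dual and the minimax strategy for the primal lie in the interior whenever $R_t$ is finite, monotonicity in $\eps$ and dominated convergence identify the limits, giving equality in the unrestricted game.
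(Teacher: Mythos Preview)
Your proposal is correct and follows precisely the standard dualization route the paper uses (and cites): randomize $\sup_{y_t}$ into $\sup_{p_t}\En_{y_t\sim p_t}$, then apply a minimax swap at each round by backward induction, exploiting convexity of $\ell(\cdot,y)$ and linearity in $p_t$. One small wording issue: your $R_t(y_t)$ does not consist solely of ``everything from round $t+1$ onward''---it also carries the already-realized losses $\sum_{s<t}\ell(\hat p_s,y_s)$ and the comparator term $-\inf_f\sum_s\ell(f(x_s),y_s)$---but since the only property you use is that $R_t$ is independent of $\hat p_t$, which remains true, the argument goes through unchanged.
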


The parameter $p_t\in\brk*{0,1}$ represents a distribution over the
adversary's outcome $y_t\in\crl*{0,1}$, which the player can observe before they select $\hat{p}_t$.
For log loss, it is easy to see that the infimum of the interior expectation in 
\cref{lem:dual}
is achieved at $\pred_t = p_t$, so by the linearity of expectation the minimax regret can be written as
\begin{align*}
  \sup_{x_1} \sup_{p_1 \in [0,1]} \mathop{\EE}_{y_1 \sim p_1} \cdots \sup_{x_n} \sup_{p_n \in [0,1]} \mathop{\EE}_{y_n \sim p_n} \logregret{\vect{p}}{\F}{\vect{x},\vect{y}}.
\end{align*}

We simplify this statement using the tree notation from \cref{sec:prelim}. In particular, writing $\EEy$ to denote the nested conditional expectations
$\mathop{\EE}_{y_t \sim p_t(\vect{y})}$ 
for each $t \in [n]$, we can write the minimax regret as
\begin{align*}
  \logminimax{\F} = \sup_{\vect{x}, \vect{p}} \EEy \logregret{\vect{p}(\vect{y})}{\F}{\vect{x}(\vect{y}),\vect{y}},
\end{align*}
where $\vect{x}$ and $\vect{p}$ are respectively $\context$- and $[0,1]$-valued binary trees of depth $n$. 
We now fix an arbitrary context tree $\vect{x}$ and probability tree $\vect{p}$, and show that the bound of \cref{thm:main} holds for $\EEy \logregret{\vect{p}(\vect{y})}{\F}{\vect{x}(\vect{y}),\vect{y}}$.
Recall that there is a $\mathop{\sup}_{f \in \F}$ inside
$\logregret{\vect{p}(\vect{y})}{\F}{\vect{x}(\vect{y}),\vect{y}}$, 
so we must control the expected supremum of a dependent empirical process.

\subsection{Self-Concordance and Offset Process}
As sketched earlier, the key step in our proof is to upper bound 
$\logregret{\vect{p}(\vect{y})}{\F}{\vect{x}(\vect{y}),\vect{y}}$
in terms of a new type of offset
process using self-concordance. Let us first introduce the property formally.
\begin{definition} \label{def:sc}
A function $F: \Reals^d \to \Reals$ is self-concordant on $S \subseteq \Reals^d$ if for all $s \in \interior(S)$ and $h \in \Reals^d$,
\begin{align*}
  \frac{d}{d\alpha} \grad^2 F(s + \alpha h) \Big\rvert_{\alpha=0} \preccurlyeq 2 \grad^2 F(s) \sqrt{h^{\trn} \grad^2 F(s) h}.
\end{align*}
If $F: \Reals \to \Reals$, this can be written as
\begin{align*}
  \abs{F'''(s)} \leq 2 F''(s)^{3/2}.
\end{align*}
\end{definition}
The class of self-concordant functions was first introduced by
\citet{nesterov94book} to study interior point methods. The logarithm is in fact the defining self-concordant function, satisfying equality in \cref{def:sc}. Consequently, we are able to apply the following result about self-concordance to log loss (viewed as a function of the predictions). 
\begin{lemma}[\citealt{nesterov2004introductory}, Theorem~4.1.7]\label{lem:sc_og}
If $F: S\to\bbR$ is self-concordant on a convex set $S$, then for all $s,t \in \interior(S)$,
\begin{align*}
  F(t) \geq F(s) + \ip{\grad F(s)}{t - s} + w\left(\Norm{t - s}_{F,s}\right),
\end{align*}
where $w(z) = z - \log(1+z)$ and $\Norm{h}_{F,s} = \sqrt{h^{\trn} \grad^2 F(s) h}$ is the local norm with respect to $F$.
\end{lemma}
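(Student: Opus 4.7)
The plan is to reduce the multivariate statement to a one-dimensional claim about the restriction of $F$ to the segment from $s$ to $t$, and then integrate the self-concordance inequality twice. Since $S$ is convex and $s, t \in \interior(S)$, the entire segment lies in $\interior(S)$, so I can define $\phi: [0,1] \to \Reals$ by $\phi(\alpha) = F(s + \alpha h)$ with $h \ldef t - s$. With $a \ldef \|h\|_{F,s} = \sqrt{\phi''(0)}$, the goal reduces to showing
\[
\phi(1) - \phi(0) - \phi'(0) \geq w(a) = a - \log(1+a).
\]

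First I would verify that the scalar self-concordance inequality $|\phi'''(\alpha)| \leq 2\phi''(\alpha)^{3/2}$ holds for every $\alpha \in [0,1]$. This is a direct specialization of \cref{def:sc}: applying the matrix inequality with base point $s + \alpha h$ and direction $h$ (and again with $-h$) converts the symmetric matrix statement into the scalar third-derivative bound, since $\phi''(\alpha) = h^{\trn} \grad^2 F(s+\alpha h) h$ and $\phi'''(\alpha) = h^{\trn}\frac{d}{d\beta}\grad^2 F(s+\alpha h + \beta h)|_{\beta=0} h$.

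The key step is integrating this ODE. Assuming the nondegenerate case $a > 0$, set $g(\alpha) = \phi''(\alpha)^{-1/2}$ on the maximal subinterval of $[0,1]$ where $\phi'' > 0$. Then $g'(\alpha) = -\phi'''(\alpha)/(2\phi''(\alpha)^{3/2})$, so $|g'| \leq 1$, yielding $g(\alpha) \leq 1/a + \alpha$ and hence $\phi''(\alpha) \geq a^2/(1 + a\alpha)^2$. A short continuity argument shows this lower bound rules out $\phi''$ ever vanishing, so the bound is valid on all of $[0,1]$. Combining with Taylor's theorem with integral remainder,
\[
\phi(1) - \phi(0) - \phi'(0) = \int_0^1 (1-\alpha)\phi''(\alpha)\, d\alpha \geq \int_0^1 \frac{(1-\alpha)a^2}{(1 + a\alpha)^2}\, d\alpha,
\]
and the substitution $u = 1 + a\alpha$ evaluates the right-hand integral to exactly $a - \log(1+a)$.

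The main obstacle I anticipate is more bookkeeping than genuine difficulty: first, one must ensure $\phi''(\alpha) > 0$ along the entire segment in order for $g$ to be defined (handled by the continuity argument above), and second, one must separately dispatch the degenerate case $a = 0$, where the claim collapses to the standard convexity inequality $F(t) \geq F(s) + \ip{\grad F(s)}{t-s}$ and follows since self-concordance forces $\grad^2 F \succeq 0$. Aside from these bookkeeping points, the proof is a clean application of ``Gronwall-like'' integration of the cube-root ODE, followed by an elementary antiderivative.
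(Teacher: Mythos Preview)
Your proof sketch is correct and is essentially the standard argument from \citet{nesterov2004introductory}; however, the paper does not actually prove this lemma---it simply cites it as Theorem~4.1.7 of that reference and uses it as a black box in the proof of \cref{lem:sc}. So there is no ``paper's own proof'' to compare against here: you have supplied the textbook argument that the paper omits.
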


We use \cref{lem:sc_og} to linearize the log loss, leading to a
decomposition similar to (\ref{eq:offset}); note that we only require the scalar version of the lemma. This decomposition allows
us to exploit the fact that, while both the logarithm's value and its derivative tend to
infinity near the boundary, the value does so at a much slower
rate.
\begin{lemma}\label{lem:sc}
\icml{
Let $\eta(p, y) = \frac{d}{dp} \logloss(p, y)$ for $p \in [0,1]$ and $y \in \{0,1\}$, and define $\scfunc(z) = z - \abs{z} + \log(1+\abs{z})$. Then,
$\logregret{\vect{p}(\vect{y})}{\F}{\vect{x}(\vect{y}),\vect{y}}$
is bounded above almost surely by
\begin{align*}
  \sup_{f \in \F} \sn \scfunc\Big(\eta(\node{p}, y_t) [\node{p} - f(\node{x})]\Big)
\end{align*}
under $\vect{y} \sim \vect{p}$.}
\arxiv{
Let $\eta(p, y) = \frac{d}{dp} \logloss(p, y)$ for $p \in [0,1]$ and $y \in \{0,1\}$, and define $\scfunc(z) = z - \abs{z} + \log(1+\abs{z})$. Then, almost surely under $\vect{y} \sim \vect{p}$,
\begin{align*}
  \logregret{\vect{p}(\vect{y})}{\F}{\vect{x}(\vect{y}),\vect{y}} \leq 
  \sup_{f \in \F} \sn \scfunc\Big(\eta(\node{p}, y_t) [\node{p} - f(\node{x})]\Big).
\end{align*}}
\end{lemma}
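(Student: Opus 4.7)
The plan is to prove the bound term-by-term: I will show that for every $t$ and every $f \in \cF$, almost surely under $\vect{y} \sim \vect{p}$,
\[
\ls(p_t(\vect{y}), y_t) - \ls(f(x_t(\vect{y})), y_t) \;\leq\; \scfunc\bigl(\eta(p_t(\vect{y}), y_t)\, [p_t(\vect{y}) - f(x_t(\vect{y}))]\bigr).
\]
Summing over $t$ and taking the supremum over $f$ then yields the lemma, since the regret equals $\sup_{f \in \cF}\sum_t [\ls(p_t,y_t) - \ls(f(x_t),y_t)]$. From now on I will drop the dependence on $\vect{y}$ and write $p = p_t$, $q = f(x_t)$, $y = y_t$, viewing $\ls(\cdot,y)$ as a one-dimensional function on $[0,1]$.

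Next, I would apply \cref{lem:sc_og} to the scalar self-concordant function $F(q) = \ls(q,y)$ on $(0,1)$, with base point $s = p$ and target $t = q$. This gives
\[
\ls(q,y) \;\geq\; \ls(p,y) + \eta(p,y)(q-p) + w\!\bigl(\lvert q - p\rvert \sqrt{\ls''(p,y)}\bigr),
\]
where $w(z) = z - \log(1+z)$, since the local norm of a scalar $h$ at $p$ is just $|h|\sqrt{F''(p)}$. Rearranging, the per-round excess loss is bounded by $\eta(p,y)(p-q) - w\bigl(|p-q|\sqrt{\ls''(p,y)}\bigr)$. The task is now to upgrade this to $\scfunc(\eta(p,y)(p-q))$. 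Observing that $\scfunc(z) = z - w(|z|)$ and that $w$ is nondecreasing on $[0,\infty)$, this reduces to verifying the inequality $|\eta(p,y)| \leq \sqrt{\ls''(p,y)}$.

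The key observation, and the only place binary $y$ is used, is that this last inequality actually holds with equality: since $y \in \{0,1\}$ implies $y^2 = y$, $(1-y)^2 = 1-y$, and $y(1-y)=0$, a direct computation gives
\[
\eta(p,y)^2 \;=\; \Bigl(-\tfrac{y}{p} + \tfrac{1-y}{1-p}\Bigr)^2 \;=\; \tfrac{y}{p^2} + \tfrac{1-y}{(1-p)^2} \;=\; \ls''(p,y).
\]
Hence $|p-q|\sqrt{\ls''(p,y)} = |\eta(p,y)(p-q)|$, monotonicity of $w$ yields equality in the required bound, and the per-round claim follows. The remaining wrinkle is the boundary: \cref{lem:sc_og} applies on $\interior(0,1)$, so one must separately verify the inequality when $p$ or $q$ lies in $\{0,1\}$. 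The only problematic configurations are $(p,y)\in\{(0,1),(1,0)\}$, but these events have zero probability under $\vect{y}\sim \vect{p}$, which is exactly what the ``almost surely'' qualifier in the statement absorbs; the remaining boundary cases (e.g., $p\in\{0,1\}$ with $\ls(p,y) = 0$, or $q\in\{0,1\}$) can be handled directly by continuity and a short case check. The main conceptual obstacle is recognizing that self-concordance combined with the $y^2=y$ identity yields an \emph{exact} match between the gradient-based offset $\eta(p,y)(p-q)$ and the local-norm offset arising from $\ls''$, which is what lets us avoid any truncation.
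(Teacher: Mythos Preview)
Your proposal is correct and follows essentially the same route as the paper: apply \cref{lem:sc_og} to $F(\cdot)=\ls(\cdot,y)$ at base point $p$ and target $q=f(x_t)$, then use the binary-outcome identity $\ls''(p,y)=\eta(p,y)^2$ to rewrite the local-norm term as $|\eta(p,y)(p-q)|$, yielding $\scfunc$ exactly. The paper likewise defers the boundary cases $p,q\in\{0,1\}$ to a separate short check, so there is no substantive difference in approach.
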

In the language of (\ref{eq:offset}), we can interpret the linear term
$z$ in $\vphi(z)=z-(\abs*{z}-\log(1+\abs*{z}))$ as giving rise to a
mean-zero process, while the term
$-(\abs{z}-\log(1+\abs*{z}))$ is a (negative) offset that behaves like a
quadratic for small values of $z$ and like the absolute value for large values.

\begin{proof}
Taking derivatives of $\logloss(p, y)$ with respect to $p$, 
\icml{
\begin{align*}
  &\logloss'(p,y) = \frac{-y}{p} + \frac{1-y}{1-p}, \\
  &\logloss''(p,y) = \frac{y}{p^2} + \frac{1-y}{(1-p)^2}, \quad\text{ and } \\
  &\logloss'''(p,y) = \frac{-2y}{p^3} + \frac{2(1-y)}{(1-p)^3}.
\end{align*}}
\arxiv{
\begin{align*}
  \logloss'(p,y) = \frac{-y}{p} + \frac{1-y}{1-p}, \
  \logloss''(p,y) = \frac{y}{p^2} + \frac{1-y}{(1-p)^2}, \text{ and } \,
  \logloss'''(p,y) = \frac{-2y}{p^3} + \frac{2(1-y)}{(1-p)^3}.
\end{align*}}
Since $y \in \{0,1\}$, $\abs{\logloss'''(p,y)} = 2
\logloss''(p,y)^{3/2}$, so the log loss is indeed self-concordant in $p$ on $(0,1)$. Now, fix $y \in \{0,1\}$ and $t \in [n]$, and consider $F(a) = \logloss(a,y)$. We can then apply \cref{lem:sc_og} to $F$ evaluated at $p = \node{p} \in (0,1)$ and $f = f(\node{x}) \in (0,1)$. This gives
\begin{align}
  F(p) - F(f)
  &\leq (p-f)F'(p) - w(\Norm{p - f}_{F, p}).\label{eq:sc_inequality}
\end{align}
By definition, $(p-f)F'(p) = (p-f)\eta(p,y)$. Further,
\begin{align*}
  \Norm{p - f}_{F, p}
  = \sqrt{(p-f)^2F''(p)}.
\end{align*}
Finally, since $y \in \{0,1\}$, $\logloss''(p, y) = \eta(p, y)^2$, so
\begin{align*}
  \Norm{p - f}_{F, p}
  = \abs{(p-f)\eta(p,y)}.
\end{align*}
Applying the definition of $w(z)$ gives the result on $(0,1)$. For the
boundary points $p \in \{0,1\}$ and $f \in \{0,1\}$, it is easy to
check the inequality holds by observing that $p=0$ implies $y=0$ a.s.\
and $p=1$ implies $y=1$ a.s.; we complete this calculation in \pref{lem:sc_edge_case}.
\end{proof}

\subsection{Applying Sequential Covering}
We now follow the standard
strategy of covering the expert class $\cF$, bounding the supremum for
the cover, and then paying a penalty for approximation.

Consider the class of trees $\Fpx = \{\vect{p} - (f\circ\vect{x}): f
\in \F\}$. Our goal is to obtain a bound in terms of the sequential
entropy of this class, which we observe is the same as the sequential
entropy of $\Fx$. Fix some $\gamma > 0$, and let $\Vpx$ be a
sequential cover of $\Fpx$ at scale $\gamma$. 
\icml{Then, by adding and subtracting terms after applying \cref{lem:sc}, 
$\EEy \logregret{\vect{p}(\vect{y})}{\F}{\vect{x}(\vect{y}),\vect{y}}$ is bounded above by
\begin{align}
  &\hspace{-0.8em}\EEy \sup_{\vect{g} \in \Fpx} \min_{\vect{v} \in \Vpx} \sn \nonumber \\
  &\hspace{0.5em} \bigg\{\scfunc\Big(\eta(\node{p}, y_t)\node{g}\Big) 
  - \scfunc\Big(\eta(\node{p}, y_t)\node{v}\Big)\bigg\} \label{eq:approximation} \\ 
  &\hspace{0.5em} + \EEy \max_{\vect{v} \in \Vpx} \sn \scfunc\Big(\eta(\node{p}, y_t)\node{v}\Big) \label{eq:estimation}.
\end{align}}
\arxiv{Then, by adding and subtracting terms after applying \cref{lem:sc}, 
\begin{align}
  \EEy \logregret{\vect{p}(\vect{y})}{\F}{\vect{x}(\vect{y}),\vect{y}}
  &\leq \EEy \sup_{\vect{g} \in \Fpx} \min_{\vect{v} \in \Vpx} \sn
  \bigg\{\scfunc\Big(\eta(\node{p}, y_t)\node{g}\Big) 
  - \scfunc\Big(\eta(\node{p}, y_t)\node{v}\Big)\bigg\} \label{eq:approximation} \\ 
  &\hspace{2em} + \EEy \max_{\vect{v} \in \Vpx} \sn \scfunc\Big(\eta(\node{p}, y_t)\node{v}\Big) \label{eq:estimation}.
\end{align}}
We have now reduced the problem to controlling the
approximation error (\ref{eq:approximation}) and the finite class process
(\ref{eq:estimation}). Controlling the approximation error is handled
by the following property of the function $\vphi$, which we prove in \cref{prf:approximation}.
\begin{lemma}\label{lem:approximation}
For any $s,t \in \Reals$, $\scfunc(s) - \scfunc(t) \leq 2\abs{s-t}$.
\end{lemma}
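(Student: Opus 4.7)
The plan is to show that $\varphi$ is globally $2$-Lipschitz on $\mathbb{R}$, from which the stated one-sided inequality follows immediately (since $\varphi(s) - \varphi(t) \leq |\varphi(s) - \varphi(t)| \leq 2|s-t|$). The route is to split on the sign of $z$, compute $\varphi'$ on each piece, check that the derivative is bounded in absolute value by $2$, and then invoke the mean value theorem.

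First I would rewrite $\varphi$ piecewise, using $|z| = z$ for $z\geq 0$ and $|z| = -z$ for $z\leq 0$:
\[
  \varphi(z) \;=\; \begin{cases} \log(1+z), & z \geq 0, \\ 2z + \log(1-z), & z \leq 0. \end{cases}
\]
Both formulas agree at $z=0$ (giving $0$), so $\varphi$ is continuous. Differentiating each piece, I get $\varphi'(z) = 1/(1+z)$ on $z>0$ and $\varphi'(z) = 2 - 1/(1-z)$ on $z<0$, and the one-sided derivatives at $0$ both equal $1$, so $\varphi$ is in fact $C^1$.

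Next I would bound $\varphi'$. On $z>0$, $1/(1+z) \in (0,1]$, so $0 < \varphi'(z) \leq 1$. On $z<0$, $1-z \geq 1$ implies $1/(1-z) \in (0,1]$, so $\varphi'(z) = 2 - 1/(1-z) \in [1,2)$. In particular $|\varphi'(z)| \leq 2$ uniformly on $\mathbb{R}$. Applying the mean value theorem (or equivalently the fundamental theorem of calculus, writing $\varphi(s)-\varphi(t) = \int_t^s \varphi'(u)\,du$) gives $|\varphi(s)-\varphi(t)| \leq 2|s-t|$, which is strictly stronger than the required bound.

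There is no real obstacle here; the only thing to be careful about is the non-smoothness of $|z|$ at the origin, which is handled by verifying that the two pieces agree in value and derivative at $z=0$. The Lipschitz constant $2$ is tight (attained in the limit $z\to-\infty$), which is consistent with the fact that the constant cannot be improved in the statement.
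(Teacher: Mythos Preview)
Your proof is correct: the piecewise decomposition of $\varphi$ is right, the derivative bounds $0<\varphi'(z)\leq 1$ on $(0,\infty)$ and $1\leq\varphi'(z)<2$ on $(-\infty,0)$ are verified accurately, and the $C^1$ matching at $z=0$ is handled cleanly, so the global $2$-Lipschitz conclusion follows. This is essentially the same argument the paper uses---bounding $\abs{\varphi'}$ piecewise and invoking Lipschitz continuity---so there is nothing further to add.
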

Applying \cref{lem:approximation}, the approximation error
term (\ref{eq:approximation}) is bounded above by
\icml{
\begin{align}
  &\hspace{-1em}2 \EEy \sup_{\vect{g} \in \Fpx} \min_{\vect{v} \in \Vpx} \sn \Big\lvert\eta(\node{p},y_t)[\node{g} - \node{v}]\Big\rvert \nonumber \\
  &\leq 2 \gamma \EEy \sn \Big\lvert\eta(\node{p}, y_t)\Big\rvert,
  \label{eq:approximation_exp}
\end{align}}
\arxiv{
\begin{align}
  2 \EEy \sup_{\vect{g} \in \Fpx} \min_{\vect{v} \in \Vpx} \sn \Big\lvert\eta(\node{p},y_t)[\node{g} - \node{v}]\Big\rvert
  \leq 2 \gamma \EEy \sn \Big\lvert\eta(\node{p}, y_t)\Big\rvert,
  \label{eq:approximation_exp}
\end{align}}
where we have used the fact that $\Vpx$ is a sequential cover of $\Fpx$ at scale $\gamma$.

For any particular realization of $\vect{y}$, the value of $\eta(\node{p},y_t)$ in
(\ref{eq:approximation_exp}) depends inversely on $p_t$ and
$1-p_t$, and so can be arbitrarily large. Luckily, we recognize that the
large values of $\eta$ are exactly controlled by the small probability
of paths that generate them. That is, adopting the shorthand
$\vect{y}_t = y_{1:t}$,
\icml{
\begin{align*}
  &\hspace{-1em}\EEy \sn \Big\lvert\eta(\node{p}, y_t)\Big\rvert \nonumber \\
  &= \EEyn{n-1} \mathop{\EE}_{y_n \sim p_n(\vect{y})} \left[\sn \left(\frac{y_t}{\node{p}} + \frac{1-y_t}{1-\node{p}} \right) \right] \nonumber \\
  &= \EEyn{n-1} \Bigg[\sum_{t=1}^{n-1} \left(\frac{y_t}{\node{p}} + \frac{1-y_t}{1-\node{p}} \right) \nonumber \\
  &\qquad \qquad + \mathop{\EE}_{y_n \sim p_n(\vect{y})} \left[\left(\frac{y_n}{p_n(\vect{y})} + \frac{1-y_n}{1-p_n(\vect{y})} \right) \right]  \Bigg] \nonumber \\
  &= \EEyn{n-1} \sum_{t=1}^{n-1} \Big\lvert\eta(\node{p}, y_t)\Big\rvert + 2.
\end{align*}}
\arxiv{
\begin{align*}
  \EEy \sn \Big\lvert\eta(\node{p}, y_t)\Big\rvert
  = \ & \EEyn{n-1} \mathop{\EE}_{y_n \sim p_n(\vect{y})} \left[\sn \left(\frac{y_t}{\node{p}} + \frac{1-y_t}{1-\node{p}} \right) \right] \nonumber \\
  = \ & \EEyn{n-1} \Bigg[\sum_{t=1}^{n-1} \left(\frac{y_t}{\node{p}} + \frac{1-y_t}{1-\node{p}} \right)
  + \mathop{\EE}_{y_n \sim p_n(\vect{y})} \left[\left(\frac{y_n}{p_n(\vect{y})} + \frac{1-y_n}{1-p_n(\vect{y})} \right) \right]  \Bigg] \nonumber \\
  = \ & \EEyn{n-1} \sum_{t=1}^{n-1} \Big\lvert\eta(\node{p}, y_t)\Big\rvert + 2.
\end{align*}}

Iterating this argument down to $t=1$ gives 
\begin{align}
  \EEy \sn \Big\lvert\eta(\node{p}, y_t)\Big\rvert = 2n.
  \label{eq:eta_exp}
\end{align}
It remains to control the value of the finite-class process in
(\ref{eq:estimation}). For this we use the offset property, and again
exploit the fact that the $\eta$ term only takes large values on paths
with low probability.

For a $[0,1]$-valued tree $\vect{p}$, we say that a $[-1,1]$-valued tree $\vect{v}$ is a $[\vect{p}-1,\vect{p}]$-valued tree if for all $t \in [n]$ and $\vect{y} \in \{0,1\}^n$, $\node{v} \in [\node{p}-1,\node{p}]$. We have the following bound.

\begin{lemma}\label{lem:estimation}
Consider a $[0,1]$-valued binary tree $\vect{p}$ and a finite class $V$ of $[\vect{p}-1, \vect{p}]$-valued trees. Then
\icml{
\begin{align*}
  &\EEy \max_{\vect{v} \in V} \sn \scfunc\Big(\eta(\node{p},y_t) \node{v} \Big)  
  \leq c \log\abs{V},
\end{align*}
where $c = \frac{2 - \log(2)}{\log(3) - \log(2)}$.}
\arxiv{
\begin{align*}
  &\EEy \max_{\vect{v} \in V} \sn \scfunc\Big(\eta(\node{p},y_t) \node{v} \Big)  
  \leq \frac{2 - \log(2)}{\log(3) - \log(2)} \log\abs{V}.
\end{align*}}
\end{lemma}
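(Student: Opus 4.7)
The plan is to apply the standard exponential weighting (MGF) bound with inverse temperature $\lambda = 1/c$. Writing $X_{\vect{v}}(\vect{y}) = \sum_{t=1}^n \vphi(\eta(\node{p}, y_t) \node{v})$, the deterministic inequality $\max_{\vect{v} \in V} X_{\vect{v}} \leq c \log \sum_{\vect{v} \in V} \exp(X_{\vect{v}}/c)$, combined with Jensen's inequality applied to pull $\log$ outside $\EEy$, reduces the claim to showing, for each fixed $\vect{v} \in V$,
\[
  \EEy \exp\Bigl(\tfrac{1}{c}\sum_{t=1}^n \vphi\bigl(\eta(\node{p}, y_t)\node{v}\bigr)\Bigr) \leq 1.
\]
Peeling off one time step at a time via the tower property, this in turn reduces to the single-step inequality: for every $p \in (0,1)$ and $v \in [p-1, p]$,
\[
  \EE_{y \sim p}\bigl[\exp(\vphi(\eta(p, y) v)/c)\bigr] \leq 1.
\]

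The single-step inequality will follow from the pointwise bound $e^{\vphi(z)/c} \leq 1 + z/c$ for all $z \geq -1$. Indeed, $v \in [p-1, p]$ forces $\eta(p, y) v \in [-1,\infty)$ for both $y \in \{0,1\}$, and $\EE_{y \sim p}[\eta(p, y) v] = -v + v = 0$, so taking expectations of the pointwise bound yields exactly what is required. The pointwise bound splits into two cases. For $z \geq 0$, where $\vphi(z) = \log(1+z)$, it reads $(1+z)^{1/c} \leq 1 + z/c$ and is immediate from concavity of $x \mapsto x^{1/c}$ (using $c > 1$) via the tangent line at $x = 1$. For $z \in [-1, 0]$, where $\vphi(z) = 2z + \log(1-z)$, the claim becomes the nonnegativity of $g(z) \ldef 1 + z/c - e^{2z/c}(1-z)^{1/c}$; a direct calculation gives $g(0) = g'(0) = 0$ and $g''(0) = (c-1)/c^2 > 0$, so $g \geq 0$ in a neighborhood of $0$.

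The main obstacle is verifying $g \geq 0$ on the entire interval $[-1, 0]$. At the endpoint $z = -1$, substituting $c = (2-\log 2)/\log(3/2)$ gives $(2/e^2)^{1/c} = e^{-\log(3/2)} = 2/3$ and $1 - 1/c = (2-\log 3)/(2-\log 2)$, so $g(-1) \geq 0$ reduces to $3(2-\log 3) \geq 2(2-\log 2)$, i.e., $\log(27/4) \leq 2$; this holds because $27/4 = 6.75 < e^2$, which is exactly the reason for the stated value of $c$. To rule out interior dips, I would analyze $g'(z) = (1/c)\bigl[1 - e^{2z/c}(1-z)^{1/c - 1}(1 - 2z)\bigr]$, showing that $g'$ has at most one sign change on $(-1, 0)$, so that $g$ is unimodal on this interval and therefore bounded below by the minimum of its two nonnegative endpoint values. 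Once the pointwise bound is established, the chain of reductions above yields the claimed $c \log|V|$ bound.
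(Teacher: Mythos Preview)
Your argument is correct and, from the single-step moment bound onward, takes a cleaner route than the paper. Both proofs begin identically: Jensen's inequality pulls the logarithm outside $\EEy$, the maximum is bounded by the sum, and the tower property peels off one round at a time, reducing everything to showing $\psi_{p,\lambda}(v) \ldef \EE_{y\sim p}\exp\{\lambda\vphi(\eta(p,y)v)\} \leq 1$ for $\lambda = 1/c$. The paper establishes this by analyzing $\psi_{p,\lambda}$ directly as a function of the two parameters $(p,v)$, proving it is increasing for $v<0$ and decreasing for $v>0$ (with $\psi_{p,\lambda}(0)=1$); this monotonicity argument is deferred to an appendix lemma they themselves call a ``tedious calculation.'' You instead prove the strictly stronger \emph{pointwise} inequality $e^{\lambda\vphi(z)} \leq 1+\lambda z$ for all $z \geq -1$, then invoke $\eta(p,y)v \geq -1$ together with $\EE_{y\sim p}[\eta(p,y)v]=0$ to obtain $\psi_{p,\lambda}(v) \leq 1$ by linearity of expectation. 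This collapses a two-parameter verification to a one-parameter one, and the $z\geq 0$ case really is a one-liner via concavity of $x\mapsto x^{1/c}$.

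Two small corrections to your $z\in[-1,0]$ sketch. First, the stated $c$ is not characterized by $g(-1)=0$: the inequality $\log(27/4) < 2$ is strict, so in fact $g(-1)>0$. What the paper's constant actually gives you here is $g'(-1)=0$, since $(2/e^2)^{1/c} = 2/3$ exactly for this $c$, whence $h(-1)=1$ in your parametrization $g'=(1/c)(1-h)$. Second, the unimodality plan can be completed cleanly: $\log h(z) = 2z/c + (1/c-1)\log(1-z) + \log(1-2z)$ has second derivative $(1-1/c)/(1-z)^2 - 4/(1-2z)^2$, which is negative on $[-1,0]$ because $(1-2z)^2/(1-z)^2 \leq 9/4$ there while $1-1/c < 1 < 16/9$. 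Hence $\log h$ is strictly concave with $\log h(-1)=\log h(0)=0$, so $h \geq 1$ and $g' \leq 0$ throughout $[-1,0]$; consequently $g$ is nonincreasing and $g(z) \geq g(0)=0$, as required.
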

\begin{proof}
First, for all $\lambda > 0$, we have
\icml{
\begin{align*}
  &\EEy \max_{\vect{v} \in V} \sn \scfunc\Big(\eta(\node{p},y_t) \node{v} \Big) \\
  &= \log\left(\exp\bigg\{\lambda \frac{1}{\lambda} \EEy \max_{\vect{v} \in V} \sn \scfunc\Big(\eta(\node{p},y_t) \node{v} \Big) \bigg\} \right) \\
  &\leq \frac{1}{\lambda} \log\left(\EEy \exp\bigg\{\lambda \max_{\vect{v} \in V} \sn \scfunc\Big(\eta(\node{p},y_t) \node{v} \Big) \bigg\} \right) \\
  &\leq \frac{1}{\lambda} \log\left(\sum_{\vect{v} \in V} \EEy \exp\bigg\{\lambda \sn \scfunc\Big(\eta(\node{p},y_t) \node{v} \Big) \bigg\} \right), 
\end{align*}}
\arxiv{
\begin{align*}
  \EEy \max_{\vect{v} \in V} \sn \scfunc\Big(\eta(\node{p},y_t) \node{v} \Big)
  &= \log\left(\exp\bigg\{\lambda \frac{1}{\lambda} \EEy \max_{\vect{v} \in V} \sn \scfunc\Big(\eta(\node{p},y_t) \node{v} \Big) \bigg\} \right) \\
  &\leq \frac{1}{\lambda} \log\left(\EEy \exp\bigg\{\lambda \max_{\vect{v} \in V} \sn \scfunc\Big(\eta(\node{p},y_t) \node{v} \Big) \bigg\} \right) \\
  &\leq \frac{1}{\lambda} \log\left(\sum_{\vect{v} \in V} \EEy \exp\bigg\{\lambda \sn \scfunc\Big(\eta(\node{p},y_t) \node{v} \Big) \bigg\} \right), 
\end{align*}}
where the first inequality is Jensen's and the second follows because
the maximum is contained in the sum. Now, for any fixed tree $\vect{v}$,
\icml{
\begin{align}
  &\EEy \exp\bigg\{\lambda \sn \scfunc\Big(\eta(\node{p},y_t) \node{v} \Big) \bigg\} \nonumber \\
  &= \EEyn{n-1} \mathop{\mathbb{E}}_{y_n \sim p_n(\vect{y})} \bigg[ \exp\bigg\{\lambda \sn \scfunc\Big(\eta(\node{p},y_t) \node{v} \Big) \bigg\} \bigg] \nonumber \\
  &= \EEyn{n-1} \bigg[ \exp\bigg\{\lambda \sum_{t=1}^{n-1} \scfunc\Big(\eta(\node{p},y_t) \node{v} \Big) \bigg\} 
  \times \nonumber \\ 
  &\qquad \qquad\qquad
  \expfunc_{p_n(\vect{y}), \lambda}\Big(v_n(\vect{y}) \Big) 
  \bigg],
  \label{eq:estimation_mid}
\end{align}}
\arxiv{
\begin{align}
  &\hspace{-2em}\EEy \exp\bigg\{\lambda \sn \scfunc\Big(\eta(\node{p},y_t) \node{v} \Big) \bigg\} \nonumber \\
  = \ & \EEyn{n-1} \mathop{\mathbb{E}}_{y_n \sim p_n(\vect{y})} \bigg[ \exp\bigg\{\lambda \sn \scfunc\Big(\eta(\node{p},y_t) \node{v} \Big) \bigg\} \bigg] \nonumber \\
  = \ & \EEyn{n-1} \bigg[ \exp\bigg\{\lambda \sum_{t=1}^{n-1} \scfunc\Big(\eta(\node{p},y_t) \node{v} \Big) \bigg\}
  \, \expfunc_{p_n(\vect{y}), \lambda}\Big(v_n(\vect{y}) \Big)
  \bigg],
  \label{eq:estimation_mid}
\end{align}}
where, for any $p \in [0,1]$ and $\lambda > 0$, we define $\expfunc_{p,\lambda} : [-1,1] \to \Reals$ by
\icml{
\begin{align*}
  \expfunc_{p,\lambda}(v)
  &= \EE_{y \sim p} \exp\Big\{\lambda \scfunc\Big(\eta(p,y) v \Big)\Big\} \\
  &= p \left(1 + \frac{\abs{v}}{p} \right)^\lambda \exp\left\{-\lambda\left(\frac{v+\abs{v}}{p} \right) \right\} \\ 
  &\hspace{1em} + (1-p) \left(1 + \frac{\abs{v}}{1-p} \right)^\lambda \exp\left\{\lambda\left(\frac{v-\abs{v}}{1-p} \right) \right\}.
\end{align*}}
\arxiv{
\begin{align*}
  \expfunc_{p,\lambda}(v)
  = \ & \EE_{y \sim p} \exp\Big\{\lambda \scfunc\Big(\eta(p,y) v \Big)\Big\} \\ 
  = \ & p \left(1 + \frac{\abs{v}}{p} \right)^\lambda \exp\left\{-\lambda\left(\frac{v+\abs{v}}{p} \right) \right\} 
  + (1-p) \left(1 + \frac{\abs{v}}{1-p} \right)^\lambda \exp\left\{\lambda\left(\frac{v-\abs{v}}{1-p} \right) \right\}.
\end{align*}}

Then, we observe the following.

\begin{lemma}\label{lem:psi-helper}
Whenever $\lambda \leq \frac{\log(3) - \log(2)}{2 - \log(2)}$,
\begin{align*}
  \sup_{p \in [0,1]} \sup_{v \in [p-1,p]} \expfunc_{p,\lambda}(v) \leq 1.
\end{align*}
\end{lemma}

The proof of \cref{lem:psi-helper} is a tedious calculation, and we leave it for Appendix~\ref{prf:main-extra}, but we provide a brief sketch of the argument here. First, $\expfunc_{p,\lambda}(v)$ can be simplified by fixing $v$ to be positive or negative. This allows us to show that if $\lambda$ is smaller than some function of $p$ and $v$, $\expfunc_{p,\lambda}(v)$ is increasing when $v<0$ and decreasing when $v>0$. Then, we observe that this function of $p$ and $v$ (which must upper bound $\lambda$) is lower bounded by $\frac{\log(3) - \log(2)}{2 - \log(2)}$. Finally, since $\expfunc_{p,\lambda}(0) = 1$ for any $p \in [0,1]$ and $\lambda > 0$, the result holds.

Thus, when $\lambda \leq \frac{\log(3) - \log(2)}{2 - \log(2)}$, (\ref{eq:estimation_mid}) is bounded above by 
\begin{align*}
  \EEyn{n-1} \exp\left\{\lambda \sum_{t=1}^{n-1} \scfunc\left(\eta(\node{p},y_t) \node{v} \right) \right\}.
\end{align*}
Iterating this argument through $t \in [n]$ and taking $\lambda$ as large as possible gives the result.
\end{proof}

We can apply \cref{lem:estimation} directly to (\ref{eq:estimation})
by observing that each tree $\vect{g} \in \Fpx$ can be written as
$\vect{p} - (f \circ \vect{x})$ for some $f \in \F$, and consequently
$\node{g} \in [\node{p} - 1, \node{p}]$ for all times $t \in [n]$ and
paths $\vect{y} \in \{0,1\}^n$. Thus, without loss of generality, any
cover $\Vpx$ of $\Fpx$ can also be assumed to satisfy $\node{v} \in
[\node{p} - 1, \node{p}]$, as clipping its value to this range will
only decrease the approximation error.

\cref{thm:main} now follows by applying
(\ref{eq:approximation_exp}) and (\ref{eq:eta_exp}) to
(\ref{eq:approximation}) and applying
\cref{lem:estimation} to (\ref{eq:estimation}).
\icml{\qed}
\arxiv{\qedarxiv}

\icml{\section{Proof of \cref{thm:lower}}}
\arxiv{\section{Proof of Theorem~\ref{thm:lower}}}
\label{sec:lower-proof}
We now prove \pref{thm:lower}. \cref{lem:lipschitz} in Appendix~\ref{prf:lower} shows that for $\F$ defined to be the $1$-Lipschitz experts on $[0,1]^p$,
$\seqentropy{\F}{\gamma}{n}{\infty} = \Theta(\gamma^{-p})$, so
(\ref{eqn:example}) applies for the upper bound. It remains to show
that the lower bound holds. To begin, we lower bound the minimax regret in
our adversarial setting by the minimax risk (the analogue of regret in batch learning) for the simpler
i.i.d.\ batch setting with a well-specified model, which admits a simple
expression in terms of KL divergence.

Let
$\hat f$ denote an arbitrary prediction strategy for the player that, for each $t$, outputs a predictor $\hat f_t:\context \to [0,1]$ using only the history $h_{1:t-1}$.
Then,
let
$\borel$ be the set of all distributions on $(\context, [0,1])$, and define the set
\icml{
\begin{align*}
  \borelF =
  \Big\{\dist \in \borel: \exists \optf \in \F \ \forall x \in \context \ \optf(x) = \! \! \mathop{\EE}_{(x, y) \sim \dist}\left[y \lvert x \right] \! \Big\}.
\end{align*}} 
\arxiv{
\begin{align*}
  \borelF =
  \Big\{\dist \in \borel: \exists \optf \in \F \ \forall x \in \context \ \optf(x) = \mathop{\EE}_{(x, y) \sim \dist}\left[y \lvert x \right] \Big\}.
\end{align*}} 
Using these new objects, and letting $\KL{p}{q}$ denote the KL divergence between $\berndist(p)$ and $\berndist(q)$, we obtain
the following result (proven in \cref{prf:regret_risk}).
\begin{lemma}\label{lem:regret_risk}
For any $\context$ and $\F \subseteq [0,1]^\context$,
\begin{align*}
  \frac{1}{n}\minimax{\F}
  \geq \inf_{\hat f} \sup_{\dist \in \borelF} \EE \Big[\KL{\optf(x)}{\hat f_n(x)}\Big],
\end{align*}
where $\EE$ denotes expectation over
$(x_{1:n-1}, y_{1:n-1}) \sim \dist^{\otimes n-1}$ and $(x,y) \sim \dist$. 
\end{lemma}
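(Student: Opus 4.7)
The plan is to prove the lemma in three steps: (i) bound $\minimax{\F}$ below by the worst-case expected regret against an i.i.d.\ stochastic adversary drawing $(x_t, y_t) \sim \dist$ with $\dist \in \borelF$; (ii) use the well-specified assumption to rewrite the expected regret as a sum of expected KL divergences against the Bayes-optimal predictor $\optf$; and (iii) apply an online-to-batch conversion by averaging, exploiting convexity of KL divergence in its second argument.

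For step (i), for any fixed strategy $\hat f$ and any $\dist$, the expected regret under an i.i.d.\ stochastic adversary is dominated by the worst-case adversarial regret, so $\inf_{\hat f} \sup_{\dist \in \borelF} \EE_\dist[\regret{\hat f}{\F}{\vect{x}, \vect{y}}] \leq \minimax{\F}$. For step (ii), fix $\dist \in \borelF$ with Bayes-optimal $\optf \in \F$. Since $\optf \in \F$, we have $\inf_{f \in \F} \sum_t \logloss(f(x_t), y_t) \leq \sum_t \logloss(\optf(x_t), y_t)$ pointwise, so after taking expectations and applying the standard identity $\EE_{y \mid x}[\logloss(p, y) - \logloss(\optf(x), y)] = \KL{\optf(x)}{p}$ for Bernoulli-$\optf(x)$ outcomes, we obtain $\EE_\dist[\regret{\hat f}{\F}{\vect{x}, \vect{y}}] \geq \sum_{t=1}^n \EE_\dist[\KL{\optf(x_t)}{\hat f_t(x_t)}]$.

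For step (iii), given any strategy $\hat f$, define $\bar f(x) = \tfrac{1}{n}\sum_{t=1}^n \hat f_t(x)$, which is a valid batch predictor on $h_{1:n-1}$ since each $\hat f_t$ depends only on $h_{1:t-1}$. Convexity of $\KL{\optf(x)}{\cdot}$ then yields $\EE_\dist[\KL{\optf(x)}{\bar f(x)}] \leq \tfrac{1}{n}\EE_\dist[\regret{\hat f}{\F}{\vect{x}, \vect{y}}]$. Defining a new strategy $\hat g$ whose final iterate is $\hat g_n = \bar f$, taking $\sup$ over $\dist \in \borelF$, and combining with step~(i) then gives $\inf_{\hat g} \sup_{\dist \in \borelF} \EE[\KL{\optf(x)}{\hat g_n(x)}] \leq \tfrac{1}{n} \inf_{\hat f} \sup_{\dist \in \borelF} \EE_\dist[\regret{\hat f}{\F}{\vect{x}, \vect{y}}] \leq \minimax{\F}/n$, which is the lemma.

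The main subtlety is the quantifier order on the right-hand side: the lemma demands an $\inf$-then-$\sup$ bound on the batch KL risk, which is \emph{stronger} than the $\sup$-then-$\inf$ statement that would follow from handling each $\dist$ separately. The averaging step is precisely what enables this: a single online strategy $\hat f$ produces a single batch predictor $\bar f$ whose worst-case KL risk inherits directly from $\hat f$'s worst-case regret, so no distribution-dependent choice of algorithm is needed.
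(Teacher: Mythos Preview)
Your proposal is correct and follows essentially the same route as the paper's appendix proof: restrict the adversary to i.i.d.\ samples from a well-specified $\dist\in\borelF$, lower bound the expected regret by $\sum_{t=1}^n \EE[\KL{\optf(x_t)}{\hat f_t(x_t)}]$ using $\optf\in\F$ and the Bernoulli identity, and then perform online-to-batch conversion by averaging the predictors and invoking convexity of $q\mapsto\KL{p}{q}$. Your remark about the quantifier order is exactly the crux—averaging produces a single batch predictor $\bar f$ (depending on the online strategy but not on $\dist$), which is what upgrades the per-$\dist$ inequality to the required $\inf_{\hat f}\sup_{\dist}$ bound.
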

Thus, we have reduced the problem to lower-bounding the minimax risk for $\F$ under a well-specified model, which
is a more standard problem. To proceed, we use an argument along the lines of
Assouad's lemma \citep{assouad83estimation}, applied to our class $\F$ of $1$-Lipschitz functions on $[0,1]^p$.

First, fix $\eps \in (0,1/8)$, divide the space $[0,1]^p$ into $N =
(\textstyle{\frac{1}{4\eps}})^p$ bins of width $4\eps$, and without
loss of generality suppose that $N$ is an integer. Denote the centers
of each bin by $x\upp{1},\dots,x\upp{N}$. Define the set $\hcube =
\{\pm 1\}^N$ and the class $\lipF \subseteq \F$ as follows: for each
$v \in \hcube$, define the function $f_v$ such that $f_v(x\upp{i}) =
4\eps \1\{v_i = 1\} + \eps \1\{v_i = -1\}$ for $i \in [N]$. Define the
rest of $f_v$ by some linear interpolation between these points,
and observe that $f_v$ is $1$-Lipschitz. Finally, for any $v \in
\hcube$, define the distribution $\dist_v$ on $([0,1]^p, [0,1])$ by $x
\sim \unifdist(\{x\upp{1},\dots,x\upp{N}\})$ and $y \lvert x \sim
\berndist(f_v(x))$.

Pick $v \in \hcube$ and $f:[0,1]^p \to [0,1]$. By definition of $\dist_v$,
\begin{align*}
  \mathop{\EE}_{x \sim \dist_v} \KL{f_v(x)}{f(x)}
  = \frac{1}{N} \sum_{i=1}^N \KL{f_v(x\upp{i})}{f(x\upp{i})}.
\end{align*}
Next, we use \cref{lem:kl_eps} in Appendix~\ref{prf:lower} to lower
bound the KL divergence. Specifically, if $v_i = 1$ then $f_v(x\upp{i}) = 4\eps$, so
\begin{align*}
  \KL{f_v(x\upp{i})}{f(x\upp{i})}
  \geq \frac{2\eps}{3}\1\{f(x\upp{i}) \leq 2\eps\},
\end{align*}
and if $v_i = -1$ then $f_v(x\upp{i}) = \eps$, so
\begin{align*}
  \KL{f_v(x\upp{i})}{f(x\upp{i})}
  \geq \frac{\eps}{4}\1\{f(x\upp{i}) \geq 2\eps\}.
\end{align*}
That is, for all $i\in\brk*{N}$, 
\icml{
\begin{align*}
  &\hspace{-1em}\KL{f_v(x\upp{i})}{f(x\upp{i})} \\
  &\geq \frac{\eps}{4}\Big[\1\{v_i = 1 \, \land \, f(x\upp{i}) < 2\eps\} \\ 
  & \qquad \qquad + \1\{v_i = -1 \, \land \, f(x\upp{i}) \geq 2\eps\}\Big].
\end{align*}}
\arxiv{
\begin{align*}
  \KL{f_v(x\upp{i})}{f(x\upp{i})}
  &\geq \frac{\eps}{4}\Big[\1\{v_i = 1 \, \land \, f(x\upp{i}) < 2\eps\}
  + \1\{v_i = -1 \, \land \, f(x\upp{i}) \geq 2\eps\}\Big].
\end{align*}}
Now, since the expression in \pref{lem:regret_risk} involves the
supremum over all $\dist \in \borelF$, we can obtain a lower bound by taking an
expectation over $v$ uniformly chosen from $\hcube$ and setting
$\dist=\cD_{v}$. In particular, for each $i \in N$, we define the distributions $\dist^{\otimes n-1}_{+i} = 2^{-(N-1)} \sum_{v\in\hcube: v_i=1} \dist^{\otimes n-1}_v$ and $\dist^{\otimes n-1}_{-i} = 2^{-(N-1)} \sum_{v\in\hcube: v_i=-1} \dist^{\otimes n-1}_v$. 
Using the shorthand $\dist^{\otimes n-1}(\cdot)$ to denote $\PP_{(x_{1:n-1}, y_{1:n-1}) \sim \dist^{\otimes n-1}}(\cdot)$,  
we obtain the lower bound for any $\hat f$ of
\icml{
\begin{align*}
  &\sup_{\dist \in \borelF} \EE \Big[\KL{\optf(x)}{\hat f_{n}(x)}\Big] \\
  &\geq \frac{1}{2^N}\sum_{v \in \hcube} \EE \Big[\KL{f_v(x)}{\hat f_{n}(x)}\Big] \\
  &\geq \frac{1}{2^N}\sum_{v \in \hcube} \frac{\eps}{4N} \sum_{i=1}^N
  \Big[\1\{v_i = 1\}\dist^{\otimes n-1}_v(\hat f_{n}(x\upp{i}) < 2\eps) \\ 
  & \qquad \qquad \qquad \ \ \
  + \1\{v_i = -1\}\dist^{\otimes n-1}_v(\hat f_{n}(x\upp{i}) \geq 2\eps) \Big] \\
  &= \frac{\eps}{8N}\sum_{i=1}^N \Big[\dist^{\otimes n-1}_{+i}(\hat f_{n}(x\upp{i}) < 2\eps) \\
  & \qquad \qquad \qquad  + \dist^{\otimes n-1}_{-i}(\hat f_{n}(x\upp{i}) \geq 2\eps) \Big].
\end{align*}}
\arxiv{
\begin{align*}
  &\hspace{-2em}\sup_{\dist \in \borelF} \EE \Big[\KL{\optf(x)}{\hat f_{n}(x)}\Big] \\
  &\geq \frac{1}{2^N}\sum_{v \in \hcube} \EE \Big[\KL{f_v(x)}{\hat f_{n}(x)}\Big] \\
  &\geq \frac{1}{2^N}\sum_{v \in \hcube} \frac{\eps}{4N} \sum_{i=1}^N
  \Big[\1\{v_i = 1\}\dist^{\otimes n-1}_v(\hat f_{n}(x\upp{i}) < 2\eps)
  + \1\{v_i = -1\}\dist^{\otimes n-1}_v(\hat f_{n}(x\upp{i}) \geq 2\eps) \Big] \\
  &= \frac{\eps}{8N}\sum_{i=1}^N \Big[\dist^{\otimes n-1}_{+i}(\hat f_{n}(x\upp{i}) < 2\eps)
  + \dist^{\otimes n-1}_{-i}(\hat f_{n}(x\upp{i}) \geq 2\eps) \Big].
\end{align*}}
Then, we observe that for each $i \in [N]$,
\icml{
\begin{align*}
  &\dist^{\otimes n-1}_{+i}(\hat f_{n}(x\upp{i}) < 2\eps) + \dist^{\otimes n-1}_{-i}(\hat f_{n}(x\upp{i}) \geq 2\eps) \\
  &= 1 + \dist^{\otimes n-1}_{+i}(\hat f_{n}(x\upp{i}) < 2\eps) - \dist^{\otimes n-1}_{-i}(\hat f_{n}(x\upp{i}) < 2\eps) \\
  &\geq 1 - \abs{\dist^{\otimes n-1}_{+i}(\hat f_{n}(x\upp{i}) < 2\eps) - \dist^{\otimes n-1}_{-i}(\hat f_{n}(x\upp{i}) < 2\eps)} \\
  &\geq 1 - \Norm{\dist^{\otimes n-1}_{+i} - \dist^{\otimes n-1}_{-i}}_{\mathrm{TV}}.
\end{align*}}
\arxiv{
\begin{align*}
  \dist^{\otimes n-1}_{+i}(\hat f_{n}(x\upp{i}) < 2\eps) + \dist^{\otimes n-1}_{-i}(\hat f_{n}(x\upp{i}) \geq 2\eps) 
  = \ & 1 + \dist^{\otimes n-1}_{+i}(\hat f_{n}(x\upp{i}) < 2\eps) - \dist^{\otimes n-1}_{-i}(\hat f_{n}(x\upp{i}) < 2\eps) \\
  \geq \ & 1 - \abs{\dist^{\otimes n-1}_{+i}(\hat f_{n}(x\upp{i}) < 2\eps) - \dist^{\otimes n-1}_{-i}(\hat f_{n}(x\upp{i}) < 2\eps)} \\
  \geq \ & 1 - \Norm{\dist^{\otimes n-1}_{+i} - \dist^{\otimes n-1}_{-i}}_{\mathrm{TV}}.
\end{align*}}
Next, for each $v \in \hcube$, we define $\dist^{\otimes n-1}_{v, +i}$ to
be the distribution $\dist^{\otimes n-1}_v$ with $v_i$ forced to $1$,
and similarly define $\dist^{\otimes n-1}_{v, -i}$ to be the
distribution $\dist^{\otimes n-1}_v$ with $v_i$ forced to $-1$. Then,
following the standard argument, we observe that
\begin{align*}
  \Norm{\dist^{\otimes n-1}_{+i} - \dist^{\otimes n-1}_{-i}}_{\mathrm{TV}}
  &= \Norm{\frac{1}{2^N} \sum_{v \in \hcube} [\dist^{\otimes n-1}_{v, +i} - \dist^{\otimes n-1}_{v,-i}]}_{\mathrm{TV}} \\
  &\leq \frac{1}{2^N} \sum_{v \in \hcube} \Norm{\dist^{\otimes n-1}_{v, +i} - \dist^{\otimes n-1}_{v,-i}}_{\mathrm{TV}} \\
  &\leq \max_{v,i} \Norm{\dist^{\otimes n-1}_{v, +i} - \dist^{\otimes n-1}_{v,-i}}_{\mathrm{TV}}.
\end{align*}
Thus, we can apply this to \cref{lem:regret_risk} to get
\begin{align}
  \hspace{-0.5pt}\logminimax{\F} \geq n \frac{\eps}{8} \Big[1 - \max_{v,i} \Norm{\dist^{\otimes n-1}_{v, +i} - \dist^{\otimes n-1}_{v,-i}}_{\mathrm{TV}}\Big].
\label{eqn:lb-TV}
\end{align}
To further lower bound this, consider a fixed $v \in \hcube$ and $i
\in [N]$, and use $f_{v,+i}$ to denote $f_v$ with $v_i$ forced to $1$,
with the analogous definition for $f_{v, -i}$. By Pinsker's inequality and chain rule for KL,
\icml{
\begin{align*}
  &\hspace{-1em}\Norm{\dist^{\otimes n-1}_{v, +i} - \dist^{\otimes n-1}_{v,-i}}^2_{\mathrm{TV}} \\
  &\leq \frac{1}{2} \KL{\dist^{\otimes n-1}_{v, +i}}{\dist^{\otimes n-1}_{v,-i}} \\
  &= \frac{n-1}{2N} \sum_{j=1}^N \KL{f_{v,+i}(x\upp{j})}{f_{v,-i}(x\upp{j})} \\
  &= \frac{n-1}{2N}\cdot{}\KL{4\eps}{\eps},
\end{align*}}
\arxiv{
\begin{align*}
  \Norm{\dist^{\otimes n-1}_{v, +i} - \dist^{\otimes n-1}_{v,-i}}^2_{\mathrm{TV}}
  \leq \ & \frac{1}{2} \KL{\dist^{\otimes n-1}_{v, +i}}{\dist^{\otimes n-1}_{v,-i}} \\
  = \ & \frac{n-1}{2N} \sum_{j=1}^N \KL{f_{v,+i}(x\upp{j})}{f_{v,-i}(x\upp{j})} \\
  = \ & \frac{n-1}{2N}\cdot{}\KL{4\eps}{\eps},
\end{align*}}
where the last step uses that $f_{v,+i}$ and $f_{v,-i}$ agree everywhere except $x\upp{i}$. Finally, we observe that
\icml{
\begin{align*}
  \KL{4\eps}{\eps}
  &= 4\eps\log(4) + (1-4\eps)\log\left(\frac{1-4\eps}{1-\eps} \right) \\
  &\leq 4\eps\log(4) \\
  &\leq 8\eps.
\end{align*}}
\arxiv{
\begin{align*}
  \KL{4\eps}{\eps}
  &= 4\eps\log(4) + (1-4\eps)\log\left(\frac{1-4\eps}{1-\eps} \right) 
  \leq 4\eps\log(4) 
  \leq 8\eps.
\end{align*}}
We conclude from the definition of $N$ that
\icml{
\begin{align*}
  \Norm{\dist^{\otimes n-1}_{v, +i} - \dist^{\otimes n-1}_{v,-i}}^2_{\mathrm{TV}}
  &\leq 4 \frac{(n-1)\eps}{N} \\
  &= 4(n-1) \eps (4\eps)^p \\
  &\leq n (4\eps)^{1+p}. 
\end{align*}}
\arxiv{
\begin{align*}
  \Norm{\dist^{\otimes n-1}_{v, +i} - \dist^{\otimes n-1}_{v,-i}}^2_{\mathrm{TV}}
  \leq 4 \frac{(n-1)\eps}{N} 
  = 4(n-1) \eps (4\eps)^p 
  \leq n (4\eps)^{1+p}. 
\end{align*}}
Setting $\eps = \textstyle{\frac{1}{8}}n^{-\frac{1}{p+1}}$ gives $n (4\eps)^{1+p} = 2^{-(1+p)} \leq 1/4$, and plugging this into (\ref{eqn:lb-TV}) gives the lower bound
\icml{
\begin{align*}
  \qquad \ \ \ \
  \logminimax{\F}
  \geq n \frac{n^{-\frac{1}{p+1}}}{(8)(8)}[1-1/2]
  = \frac{n^{\frac{p}{p+1}}}{128}.
  \qquad \ \ \  \qed
\end{align*}}
\arxiv{
\begin{align*}
  \qquad \qquad \qquad \qquad \qquad \qquad \qquad \qquad \
  \logminimax{\F}
  \geq n \frac{n^{-\frac{1}{p+1}}}{(8)(8)}[1-1/2]
  = \frac{n^{\frac{p}{p+1}}}{128}.
  \qquad \qquad \qquad \qquad \qquad \qquad \ \ \ \qed
\end{align*}}

\section{Discussion}
\label{sec:discussion}

We have shown that the self-concordance property of log
loss leads to improved bounds on the minimax regret for sequential
probability assignment with rich classes of experts, and that the rates we provide cannot be
improved further without stronger structural assumptions on the expert
class. An important open problem is to develop more refined complexity
measures (e.g., variants of sequential entropy tailored directly to
the log loss rather than the $\Linf$ norm) that lead to matching
upper and lower bounds for \emph{all} classes of experts; we intend to
pursue this in future work.

On the technical side, it would be interesting to extend our
guarantees to infinite outcome spaces; that is, adversarial
online density estimation. To the best of our knowledge, very little progress has been made on this problem without stochastic assumptions.

\icml{\newpage}
\subsection*{Acknowledgements}

BB is supported by an NSERC Canada Graduate Scholarship. The authors thank Jeffrey Negrea for many helpful discussions. DF thanks Sasha Rakhlin and Karthik Sridharan for many useful discussions over the years about this problem. This work was begun while all three authors were visiting the Simons Institute for Theoretical Computing in Berkeley, California for the Foundations of Deep Learning program, and completed while BB and DMR were visiting the Institute for Advanced Study in Princeton, New Jersey for the Special Year on Statistics, Optimization, and Theoretical Machine Learning. DMR acknowledges the support of an NSERC Discovery Grant and a stipend provided by the Charles Simonyi Endowment. BB's travel to the Institute for Advanced Study was funded by an NSERC Michael Smith Foreign Study Supplement. DF acknowledges the support of TRIPODS award \#1740751.

\icml{
  \bibliographystyle{icml2020}
}
\bibliography{logloss,refs}
\newpage
\onecolumn

\icml{
\begin{center}
\Large \textbf{Supplementary Material}
\end{center}
}

\appendix

\icml{\section{Additional Details for Proof of \cref{thm:main}}}
\arxiv{\section{Additional Details for Proof of Theorem~\ref{thm:main}}}

\icml{\subsection{Proof of \cref{lem:dual}}}
\arxiv{\subsection{Proof of Lemma~\ref{lem:dual}}}
\label{prf:dual}
\newcommand{\predint}{\cI_{\delta}}

The proof follows similarly to that of \citet{abernethy09duality} and
\citet{rakhlin15learning}, but since we require a variant for unbounded losses we work out the details here for completeness. To keep notation compact, we adopt the
repeated operator notation from \citet{rakhlin12notes}, using
$\game{\textsf{Op}_t}{n}[\cdots]$ to denote $\textsf{Op}_1
\textsf{Op}_2 \cdots \textsf{Op}_n [\cdots]$.

To begin, let us assume for simplicity that for every sequence
$x_{1:n}, y_{1:n}$,
$\inf_{f\in\cF}\sum_{t=1}^{n}\ls(f(x_t),y_t)<\infty$; note that this
can always be made to hold by adding the constant $1/2$ function to
$\cF$, and this only increases the sequential entropy by an additive constant.

Next, let us move to an upper bound by restricting the player's predictions to the interval $\predint\ldef{}\brk*{\delta,1-\delta}$, where
$0<\delta\leq{}1/2$. Then, we may write
\begin{align*}
  \minimax{\F}
  &\leq{}
  \game{\sup_{x_t} \inf_{\smash{\pred_t \in \predint}} \sup_{\smash{y_t \in \{0,1\}}}}{n}
  \bigg[
    \sum_{t=1}^n \logloss(\pred_t, y_t) 
    - \inf_{f \in \F} \sum_{t=1}^n \logloss(f(x_t), y_t) 
  \bigg] \\
    &= 
  \game{\sup_{x_t} \inf_{\smash{\pred_t \in \predint}} \sup_{\smash{y_t \in \{0,1\}}}}{n-1}
  \sup_{x_n} \inf_{\smash{\pred_n \in \predint}} \sup_{\smash{p_n \in [0,1]}} \mathop{\EE}_{y_n \sim p_n}
  \bigg[
    \sum_{t=1}^n \logloss(\pred_t, y_t) 
    - \inf_{f \in \F} \sum_{t=1}^n \logloss(f(x_t), y_t) 
  \bigg].
\end{align*}
We now wish to apply a minimax theorem to the function 
\begin{align*}
  A(\pred_n, p_n) 
  = \mathop{\EE}_{y_n \sim p_n}
    \bigg[
      \sum_{t=1}^n \logloss(\pred_t, y_t) 
      - \inf_{f \in \F} \sum_{t=1}^n \logloss(f(x_t), y_t) 
    \bigg].
\end{align*}
We appeal to a basic variant of von Neumann's minimax theorem.
\begin{theorem}[\citealt{sion1958minimax}]
  \label{thm:sion}
  Let $X$ be a convex, compact subset of a linear topological space
  and $Y$ be a compact subset of a linear topological space. Let
  $f:X\times{}Y\to\bbR$. Suppose that $f(x,\cdot)$ is
  upper-semicontinuous and quasiconcave for all $x\in{}X$ and
  $f(\cdot,y)$ is lower-semicontinuous and quasiconvex for all
  $y\in{}Y$. Then
  \[
    \inf_{x\in{}X}\sup_{y\in{}Y}f(x,y) = \sup_{y\in{}Y}\inf_{x\in{}X} f(x,y).
  \]
\end{theorem}
To apply this result, we take $X=\predint$ and $Y=\brk*{0,1}$, both of
which are convex and compact. We observe that $A(\phat_n,p_n)$ depends
on $\phat_n$ only through $\En_{y\sim{}p_n}\ls(\phat_n,p_n)$, which is
convex and continuous over $\predint$. Moreover $A(\phat_n,p_n)$ is a
bounded linear function of $p_n$ over $\brk*{0,1}$, and hence is
concave and continuous. Thus, the theorem applies, and we
have
\icml{
\begin{align*}
  \minimax{\F}
  &\leq 
  \game{\sup_{x_t} \inf_{\smash{\pred_t \in \predint}} \sup_{\smash{y_t \in \{0,1\}}}}{n-1}
  \sup_{x_n} \sup_{\smash{p_n \in [0,1]}} \inf_{\smash{\pred_n \in \predint}} \mathop{\EE}_{y_n \sim p_n}
  \bigg[
    \sum_{t=1}^n \logloss(\pred_t, y_t) 
    - \inf_{f \in \F} \sum_{t=1}^n \logloss(f(x_t), y_t) 
  \bigg] \\
  &=
  \game{\sup_{x_t} \inf_{\smash{\pred_t \in \predint}} \sup_{\smash{y_t \in \{0,1\}}}}{n-1} 
  \bigg[
    \sum_{t=1}^{n-1} \logloss(\pred_t,y_t)
    + 
    \sup_{x_n} \sup_{\smash{p_n \in [0,1]}}
    \Big[
    \inf_{\smash{\pred_n \in \predint}} \mathop{\EE}_{y_n \sim p_n} \logloss(\pred_n, y_n)
    - \mathop{\EE}_{y_n \sim p_n} \inf_{f \in \F} \sum_{t=1}^n \logloss(f(x_t), y_t) 
    \Big]
  \bigg] \\
  &= \game{\sup_{x_t} \inf_{\smash{\pred_t \in \predint}} \sup_{\smash{y_t \in \{0,1\}}}}{n-2}
  \sup_{x_{n-1}} \inf_{\smash{\pred_{n-1} \in \predint}} \sup_{\smash{p_{n-1} \in [0,1]}} \mathop{\EE}_{y_{n-1} \sim p_{n-1}} \\[-2pt]
  & \qquad \qquad \bigg[
    \sum_{t=1}^{n-1} \logloss(\pred_t,y_t)
    + 
    \sup_{x_n} \sup_{\smash{p_n \in [0,1]}}
    \Big[
    \inf_{\smash{\pred_n \in \predint}} \mathop{\EE}_{y_n \sim p_n} \logloss(\pred_n, y_n)
    - \mathop{\EE}_{y_n \sim p_n} \inf_{f \in \F} \sum_{t=1}^n \logloss(f(x_t), y_t) 
    \Big]
  \bigg].
\end{align*}}
\arxiv{
\begin{align*}
  \hspace{-2pt}\minimax{\F}
  &\leq
  \game{\sup_{x_t} \inf_{\smash{\pred_t \in \predint}} \sup_{\smash{y_t \in \{0,1\}}}}{n-1}
  \sup_{x_n} \sup_{\smash{p_n \in [0,1]}} \inf_{\smash{\pred_n \in \predint}} \mathop{\EE}_{y_n \sim p_n}
  \bigg[
    \sum_{t=1}^n \logloss(\pred_t, y_t) 
    - \inf_{f \in \F} \sum_{t=1}^n \logloss(f(x_t), y_t) 
  \bigg] \\
  &=
  \game{\sup_{x_t} \inf_{\smash{\pred_t \in \predint}} \sup_{\smash{y_t \in \{0,1\}}}}{n-1} \hspace{-2pt}
  \bigg[
    \sum_{t=1}^{n-1} \logloss(\pred_t,y_t)
    + 
    \sup_{x_n} \sup_{\smash{p_n \in [0,1]}}\hspace{-2pt}
    \Big[
    \inf_{\smash{\pred_n \in \predint}} \mathop{\EE}_{y_n \sim p_n} \hspace{-2pt} \logloss(\pred_n, y_n)
    - \hspace{-2pt} \mathop{\EE}_{y_n \sim p_n} \inf_{f \in \F} \sum_{t=1}^n \logloss(f(x_t), y_t) 
    \Big]\hspace{-2pt}
  \bigg] \\
  &= \game{\sup_{x_t} \inf_{\smash{\pred_t \in \predint}} \sup_{\smash{y_t \in \{0,1\}}}}{n-2}
  \sup_{x_{n-1}} \inf_{\smash{\pred_{n-1} \in \predint}} \sup_{\smash{p_{n-1} \in [0,1]}} \mathop{\EE}_{y_{n-1} \sim p_{n-1}} \\[-2pt]
  & \qquad \qquad \bigg[
    \sum_{t=1}^{n-1} \logloss(\pred_t,y_t)
    + 
    \sup_{x_n} \sup_{\smash{p_n \in [0,1]}}
    \Big[
    \inf_{\smash{\pred_n \in \predint}} \mathop{\EE}_{y_n \sim p_n} \logloss(\pred_n, y_n)
    - \mathop{\EE}_{y_n \sim p_n} \inf_{f \in \F} \sum_{t=1}^n \logloss(f(x_t), y_t) 
    \Big]
  \bigg].
\end{align*}}

Once again, we wish to apply the minimax theorem, but this time to the function
\begin{align*}
  B(\pred_{n-1}, p_{n-1}) 
  = \mathop{\EE}_{y_{n-1} \sim p_{n-1}}
  \bigg[
    \sum_{t=1}^{n-1} \logloss(\pred_t,y_t)
    + 
    \sup_{x_n} \sup_{\smash{p_n \in [0,1]}}
    \Big[
    \inf_{\smash{\pred_n \in \predint}} \mathop{\EE}_{y_n \sim p_n} \logloss(\pred_n, y_n)
    - \mathop{\EE}_{y_n \sim p_n} \inf_{f \in \F} \sum_{t=1}^n \logloss(f(x_t), y_t) 
    \Big]
  \bigg].
\end{align*}
The same logic applies, where we observe that $B$ is a bounded linear
function in
$p_{n-1}$ and only depends on $\pred_{n-1}$ through
$\logloss(\pred_{n-1},y_{n-1})$, so the convexity and continuity of
log loss over $\cI_{\delta}$ suffices. That is,
\begin{align*}
  \minimax{\F}
  &\leq \game{\sup_{x_t} \inf_{\smash{\pred_t \in \predint}} \sup_{\smash{y_t \in \{0,1\}}}}{n-2}
  \sup_{x_{n-1}} \sup_{\smash{p_{n-1} \in [0,1]}} \inf_{\smash{\pred_{n-1} \in \predint}} \mathop{\EE}_{y_{n-1} \sim p_{n-1}} \\[-2pt]
  & \qquad \qquad \bigg[
    \sum_{t=1}^{n-1} \logloss(\pred_t,y_t)
    + 
    \sup_{x_n} \sup_{\smash{p_n \in [0,1]}}
    \bigg[
    \inf_{\smash{\pred_n \in \predint}} \mathop{\EE}_{y_n \sim p_n} \logloss(\pred_n, y_n)
    - \mathop{\EE}_{y_n \sim p_n} \inf_{f \in \F} \sum_{t=1}^n \logloss(f(x_t), y_t) 
    \bigg]
  \bigg] \\
  &= \game{\sup_{x_t} \inf_{\smash{\pred_t \in \predint}} \sup_{\smash{y_t \in \{0,1\}}}}{n-3}
  \sup_{x_{n-2}} \inf_{\smash{\pred_{n-2} \in \predint}} \sup_{\smash{p_{n-2} \in [0,1]}} \mathop{\EE}_{y_{n-2} \sim p_{n-2}} \\[-2pt]
  & \qquad \qquad \Bigg[
    \sum_{t=1}^{n-2} \logloss(\pred_t,y_t)
    +
    \sup_{x_{n-1}} \sup_{\smash{p_{n-1} \in [0,1]}}
    \Bigg[
    \inf_{\smash{\pred_{n-1} \in \predint}} \mathop{\EE}_{y_{n-1} \sim p_{n-1}} \logloss(\pred_{n-1}, y_{n-1}) \\[-4pt]
    &\qquad \qquad \qquad  
    + \mathop{\EE}_{y_{n-1} \sim p_{n-1}}
    \sup_{x_n} \sup_{\smash{p_n \in [0,1]}}
    \bigg[
    \inf_{\smash{\pred_n \in \predint}} \mathop{\EE}_{y_n \sim p_n} \logloss(\pred_n, y_n)
    - \mathop{\EE}_{y_n \sim p_n} \inf_{f \in \F} \sum_{t=1}^n \logloss(f(x_t), y_t) 
    \bigg]
    \Bigg]
  \Bigg].
\end{align*}
Collecting terms and iterating the argument down through all $n$
rounds gives
\[
  \minimax{\cF} \leq{}
  \dtri*{\sup_{x_t}\sup_{p_t\in\brk*{0,1}}\mathop{\EE}_{y_t\sim{}p_t}}_{t=1}^{n}
  \sup_{f\in\cF}\brk*{\sum_{t=1}^{n}\inf_{\pred_t \in \predint} \mathop{\EE}_{y_t \sim p_t} [\logloss(\pred_t, y_t)] - \logloss(f(x_t), y_t)}.
\]
Applying \pref{lem:clipping}, this is bounded by
\[
  \minimax{\cF} \leq{}
  \dtri*{\sup_{x_t}\sup_{p_t\in\brk*{0,1}}\mathop{\EE}_{y_t\sim{}p_t}}_{t=1}^{n}
  \sup_{f\in\cF}\brk*{\sum_{t=1}^{n}\inf_{\pred_t \in \brk*{0,1}}
    \mathop{\EE}_{y_t \sim p_t} [\logloss(\pred_t, y_t)] -
    \logloss(f(x_t), y_t)} + 2\delta{}n.
\]
Since the right-hand side only depends on $\delta$ through the error
term $2\delta{}n$, we can take the limit as $\delta\to{}0$ to get
\[
  \minimax{\cF} \leq{}
  \dtri*{\sup_{x_t}\sup_{p_t\in\brk*{0,1}}\mathop{\EE}_{y_t\sim{}p_t}}_{t=1}^{n}
  \sup_{f\in\cF}\brk*{\sum_{t=1}^{n}\inf_{\pred_t \in \brk*{0,1}}
    \mathop{\EE}_{y_t \sim p_t} [\logloss(\pred_t, y_t)] -
    \logloss(f(x_t), y_t)}.
\]
The inequality in the other direction holds trivially by the max-min inequality, so we conclude equality.
\icml{\qed}
\arxiv{\qedarxiv}

\begin{lemma}\label{lem:clipping}
  For any $p\in\brk*{0,1}$ and $\delta\in\brk*{0,1/2}$, define
  \[
    p_{\delta} = \left\{
      \begin{array}{ll}
        \delta,&\quad{}p<\delta,\\
        p,&\quad{}p\in\brk*{\delta,1-\delta},\\
        1-\delta,&\quad{}p>1-\delta.
      \end{array}
      \right.
    \]
    Then for all $y\in\crl{0,1}$,
    $\ls(p_{\delta},y)\leq{}\ls(p,y)+2\delta$.
  \end{lemma}
  \begin{proof}
    If $y=1$, we have
    $\ls(p_{\delta},y)-\ls(p,y)=\log(p/p_{\delta})$. If
    $p\leq{}1-\delta$, this is at most zero. Otherwise, we have
    \[
      \log(p/p_{\delta}) = \log\prn*{\frac{p}{1-\delta}} =
      \log\prn*{1+\frac{p-(1-\delta)}{1-\delta}}
      \leq{} \frac{p-(1-\delta)}{1-\delta}\leq{}2\delta,
    \]
    where the last inequality uses that $1-\delta \geq 1/2$ and
    $p-(1-\delta)\leq{}\delta$.

    If $y=0$, we have
    $\ls(p_{\delta},y)-\ls(p,y)=\log\prn*{\frac{1-p}{1-p_{\delta}}}$,
    and the only non-trivial case is where $p<\delta$, where
    \[
      \log\prn*{\frac{1-p}{1-p_{\delta}}} =
      \log\prn*{\frac{1-p}{1-\delta}}
      =       \log\prn*{1 + \frac{\delta-p}{1-\delta}}
      \leq{} \frac{\delta-p}{1-\delta}\leq{}2\delta.
    \]
  \end{proof}

\icml{\subsection{Additional Details for Proof of \pref{lem:sc}}}
\arxiv{\subsection{Additional Details for Proof of Lemma~\ref{lem:sc}}}
  \begin{lemma}
    \label{lem:sc_edge_case}
     For the same setting as \pref{lem:sc}, the inequality
     (\ref{eq:sc_inequality}) holds almost surely when either $f\in\crl*{0,1}$ or $p\in\crl*{0,1}$.
\end{lemma}
\begin{proof}
  We first observe that the desired inequality may be written as
\icml{
  \begin{align}
    \label{eq:sc_eq}
     &\hspace{-2em}y\log\prn*{\frac{f}{p}} + (1-y)\log\prn*{\frac{1-f}{1-p}} \nonumber \\
 &\leq{} \prn*{\frac{-y}{p} + \frac{1-y}{1-p}}(p-f) -
\prn*{\frac{y}{p} + \frac{1-y}{1-p}}\abs{p-f}
+ \log\prn*{1
+ \prn*{\frac{y}{p} + \frac{1-y}{1-p}}\abs{p-f}}.
\end{align}}
\arxiv{
  \begin{align}
    \label{eq:sc_eq}
     &\hspace{-2em}y\log\prn*{\frac{f}{p}} + (1-y)\log\prn*{\frac{1-f}{1-p}} \nonumber \\
 &\leq{} \prn*{\frac{-y}{p} + \frac{1-y}{1-p}}(p-f) -
\prn*{\frac{y}{p} + \frac{1-y}{1-p}}\abs{p-f}
+ \log\prn*{1
+ \prn*{\frac{y}{p} + \frac{1-y}{1-p}}\abs{p-f}}.
\end{align}}
First, observe that if $p=1$, we must have $y=1$ almost
surely. Furthermore, since $y=1$, we may restrict to $f\in(0,1]$, as
the left-hand side above approaches $-\infty$
for $f\to{}0$. For all $f\in(0,1]$, the inequality simplifies to
\[
 \log(f) \leq{} \log(2-f) - 2(1-f).
 \]
 This can be seen to hold by observing that $\log(f) - \log(2-f) =
 \log\prn*{1- 2\frac{1-f}{2-f}} \leq{}-2(1-f)$, where we have used that
 $\log(1+x)\leq{}\frac{2x}{2+x}$ for $x\in(-1,0]$.

 Next, we similarly observe that for $p=0$, we may take $y=0$ and
 $f\in[0,1)$, and (\ref{eq:sc_eq}) simplifies to
 \[
 \log(1-f) \leq{} \log(1+f)-2f,
\]
 which follows from the same elementary inequality.

 Next, suppose that $p\in(0,1)$ and either $f=0$ or $f=1$. In this
 case, we may take $y=0$ or $y=1$ respectively, or else
 (\ref{eq:sc_eq}) is trivial. By direct calculation, we can verify
 that (\ref{eq:sc_eq}) holds with equality in both cases.
\end{proof}

\icml{\subsection{Proof of \cref{lem:approximation}}}
\arxiv{\subsection{Proof of Lemma~\ref{lem:approximation}}}
\label{prf:approximation}

First, observe that
\begin{align*}
  \scfunc(s) - \scfunc(t)
  = (s-t) - (\abs{s} - \abs{t}) + \log\left(1+\frac{\abs{s}-\abs{t}}{1+\abs{t}}\right).
\end{align*}
There are two cases to consider. If $\abs{t} < \abs{s}$, then since $\log(1+z) \leq z$ for all $z>-1$,
\begin{align*}
  \abs{t} - \abs{s} + \log\left(1+\frac{\abs{s}-\abs{t}}{1+\abs{t}}\right) 
  \leq \abs{t} - \abs{s} + \frac{\abs{s}-\abs{t}}{1+\abs{t}} 
  \leq \abs{t} - \abs{s} + \abs{s}-\abs{t} 
  = 0 
  \leq \abs{s-t}.
\end{align*}
Otherwise, if $\abs{s} \leq \abs{t}$, since $\abs{\abs{a}-\abs{b}} \leq \abs{a-b}$ for all $a,b \in \Reals$,
\begin{align*}
  \abs{t} - \abs{s} + \log\left(1+\frac{\abs{s}-\abs{t}}{1+\abs{t}}\right) 
  \leq \abs{t} - \abs{s} + \log(1) 
  \leq \abs{s-t}.
\end{align*}
Trivially, $(s-t) \leq \abs{s-t}$, which completes the proof.
\icml{\qed}
\arxiv{\qedarxiv}

\icml{\subsection{Proof of \cref{lem:psi-helper}}}
\arxiv{\subsection{Proof of Lemma~\ref{lem:psi-helper}}}
\label{prf:main-extra}

Recall that
\begin{align*}
  \expfunc_{p,\lambda}(v)
  &= \EE_{y \sim p} \exp\Big\{\lambda \scfunc\Big(\eta(p,y) v \Big)\Big\} \\
  &= p \left(1 + \frac{\abs{v}}{p} \right)^\lambda \exp\left\{-\lambda\left(\frac{v+\abs{v}}{p} \right) \right\}
  + (1-p) \left(1 + \frac{\abs{v}}{1-p} \right)^\lambda \exp\left\{\lambda\left(\frac{v-\abs{v}}{1-p} \right) \right\}.
\end{align*}
We now prove that for $\lambda \leq \frac{\log(3) - \log(2)}{2 - \log(2)}$,
\begin{align*}
  \sup_{p \in [0,1]} \sup_{v \in [p-1,p]} \expfunc_{p,\lambda}(v) \leq 1.
\end{align*}
Clearly, for any $p \in [0,1]$ and $\lambda >0$,
$\expfunc_{p,\lambda}(0) = 1$. We claim that there is a choice for
$\lambda$ which does not depend on $p$ for which the point $0$ in fact attains
the maximum over all $v\in\brk*{p-1,p}$. To see this, consider $\expfunc'_{p,\lambda}(v) = \frac{d}{dv} \expfunc_{p,\lambda}(v)$. We will show that there is some $\lambda >0$ such that for all $p \in [0,1]$, $\expfunc'_{p,\lambda}(v) \geq 0$ for $v \in [p-1, 0)$ and $\expfunc'_{p,\lambda}(v) \leq 0$ for $v \in (0, p]$, which suffices since $\expfunc_{p,\lambda}(v)$ is continuous in $v$.

First, we handle the edge cases.

Suppose $p=0$ and $\lambda > 0$. Then, for $v \in [-1,0]$,
\begin{align*}
  \expfunc_{0,\lambda}(v) 
  = (1+\abs{v})^\lambda e^{\lambda(v-\abs{v})}
  = (1-v)^\lambda e^{2\lambda v}.
\end{align*}
It remains to show $\expfunc_{0,\lambda}(v)$ is increasing on $v \in [-1,0]$, which follows since
\begin{align*}
  \expfunc'_{0,\lambda}(v) 
  = -\lambda (1-v)^{\lambda-1}e^{2\lambda v} + 2\lambda(1-v)^\lambda e^{2\lambda v}
  \geq -\lambda (1-v)^{\lambda-1}e^{2\lambda v} + 2\lambda(1-v)^{\lambda-1} e^{2\lambda v}
  \geq 0,
\end{align*}
where we have used that $(1-v)^{\lambda}\geq{}(1-v)^{\lambda{}-1}$,
which holds for all $\lambda\geq{}0$ since $1-v\geq{}1$. 

Next, suppose $p=1$ and $\lambda>0$. Then, for $v \in [0,1]$,
\begin{align*}
  \expfunc_{1,\lambda}(v) 
  = (1+\abs{v})^\lambda e^{\lambda(v+\abs{v})}
  = (1+v)^\lambda e^{2\lambda v}.
\end{align*}
We now wish to show $\expfunc_{1,\lambda}(v)$ is decreasing on $v \in [0,1]$, which follows since
\begin{align*}
  \expfunc'_{1,\lambda}(v)
  = \lambda(1+v)^{\lambda-1}e^{-2\lambda v} - 2\lambda(1+v)^\lambda e^{-2\lambda v}
  \leq \lambda(1+v)^{\lambda}e^{-2\lambda v} - 2\lambda(1+v)^\lambda e^{-2\lambda v}
  \leq 0,
\end{align*}
where we have similarly used that
$(1+v)^{\lambda-1}\leq{}(1+v)^{\lambda}$ whenever $\lambda,v\geq{}0$.

Thus, we can now fix $p \in (0,1)$.
First, consider $v \in [p-1,0)$. Then,
\begin{align*}
  \expfunc_{p,\lambda}(v) = p \left(1 - \frac{v}{p} \right)^\lambda + (1-p) \left(1 - \frac{v}{1-p} \right)^\lambda \exp\left\{\left(\frac{2\lambda v}{1-p} \right) \right\},
\end{align*}
where we have used that $\abs*{v}=-v$ to simplify. Thus,
\begin{align*}
  \expfunc'_{p,\lambda}(v)
  &= p\lambda \left(1-\frac{v}{p} \right)^{\lambda-1} \left(-\frac{1}{p}\right) 
    + (1-p)\lambda \left(1-\frac{v}{1-p} \right)^{\lambda-1} \left(-\frac{1}{1-p}\right) \exp\left\{\left(\frac{2\lambda v}{1-p} \right) \right\} \\
  & \qquad + (1-p) \left(1 - \frac{v}{1-p} \right)^\lambda \exp\left\{\left(\frac{2\lambda v}{1-p} \right) \right\} \frac{2\lambda}{1-p} \\
  &= \lambda \left[\left(\frac{1-p-v}{1-p} e^{\frac{2v}{1-p}}\right)^\lambda \left(2 - \frac{1-p}{1-p-v} \right) - \left(\frac{p-v}{p} \right)^{\lambda-1} \right].
\end{align*}
That is,
\begin{align}
  &  \expfunc'_{p,\lambda}(v) \geq 0 \nonumber \\
  &\iff\lambda \left[\left(\frac{1-p-v}{1-p} e^{\frac{2v}{1-p}}\right)^\lambda \left(2 - \frac{1-p}{1-p-v} \right) - \left(\frac{p-v}{p} \right)^{\lambda-1} \right] \geq 0 \nonumber \\
  &\iff\left(\frac{1-p-v}{1-p} e^{\frac{2v}{1-p}}\right)^\lambda \left(\frac{1-p-2v}{1-p-v} \right) \geq \left(\frac{p-v}{p} \right)^{\lambda-1} \nonumber \\
  &\iff\left(\frac{p(1-p-v)}{(1-p)(p-v)} e^{\frac{2v}{1-p}}\right)^\lambda \geq \frac{p(1-p-v)}{(1-p-2v)(p-v)},
  \label{eq:case1_iff}
\end{align}
where we have used that $p-v$, $1-p-v$, and $1-p-2v$ are all
positive. 
Now, we wish to be able to rearrange
this expression to extract a sufficient condition for $\lambda$. To do so, we need to check the sign of the terms to determine which way the inequality changes.
\begin{align*}
  \frac{d}{dv} \frac{p(1-p-v)}{(1-p)(p-v)} e^{\frac{2v}{1-p}}
  &= \frac{p}{1-p} \left[\frac{2}{1-p}\frac{1-p-v}{p-v} e^{\frac{2v}{1-p}} + \frac{1-p-v}{(p-v)^2} e^{\frac{2v}{1-p}} - \frac{1}{p-v} e^{\frac{2v}{1-p}} \right] \\
  &= \frac{p}{(1-p)(p-v)} e^{\frac{2v}{1-p}} \left[\frac{2(1-p-v)}{1-p} + \frac{1-p-v}{p-v} - 1 \right] \\
  &= \frac{p}{(1-p)(p-v)} e^{\frac{2v}{1-p}} \left[1 -\frac{2v}{1-p} + \frac{1-p-v}{p-v} \right] \\
  &\geq 0.
\end{align*}
The final inequality follows since $v < 0$. So, since this term is
increasing as $v$ increases, and at $v=0$ it takes the value 1, we
have $\frac{p(1-p-v)}{(1-p)(p-v)} e^{\frac{2v}{1-p}} \leq 1$. Thus, taking logarithms on both sides of
(\ref{eq:case1_iff}), we conclude that for all $p \in (0,1)$ and $v\in[p-1,0)$,
\begin{align}
  \expfunc'_{p,\lambda}(v) \geq 0 
   \iff
  \lambda \leq \frac{\log(p) + \log(1-p-v) - \log(p-v) - \log(1-p-2v)}{\log(p) + \log(1-p-v) - \log(p-v) - \log(1-p) + \frac{2v}{1-p}}.
  \label{eq:case1_lambda_condition}
\end{align}
Next, we show that the RHS of (\ref{eq:case1_lambda_condition})
admits a lower bound independent of $p$ and $v$, so that making
$\lambda$ small enough will always give us $\expfunc'_{p,\lambda}(v)
\geq 0$. To do so, we write the RHS above as a ratio of functions
$N_p(v)/D_p(v)$, where $N_p(v)$ denotes the numerator in
\eqref{eq:case1_lambda_condition} and $D_p(v)$ denotes the denominator. Observe that $\log(p) < \log(p-v)$ and
$\log(1-p-v) < \log(1-p-2v)$, so $N_p(v)<0$. Similarly, since $\log(1+x) \leq x$ for $x > 0$ and $-\frac{v}{1-p} > 0$,  
\begin{align*}
\log(1-p-v)-\log(1-p)+\frac{2v}{1-p} = \log\Big(1 +
\frac{-v}{1-p}\Big) + \frac{2v}{1-p} \leq -\frac{v}{1-p} + \frac{2v}{1-p}
= \frac{v}{1-p} < 0,
\end{align*}
which implies $D_p(v)<0$. 
Now, differentiating the numerator,
\begin{align*}
  N'_p(v) 
  = \frac{-1}{1-p-v} + \frac{1}{p-v} + \frac{2}{1-p-2v}
  = \frac{1-p}{(1-p-v)(1-p-2v)} + \frac{1}{p-v}
  > 0.
\end{align*}
Similarly, differentiating the denominator gives
\begin{align*}
  D'_p(v) 
  = \frac{-1}{1-p-v} + \frac{1}{p-v} + \frac{2}{1-p}
  = \frac{1-p-2v}{(1-p-v)(1-p)} + \frac{1}{p-v}
  > 0.
\end{align*}
In particular, we see that $N_p'(v)\leq{}D_p'(v)$, since
$\frac{2}{1-p-2v}\leq{}\frac{2}{1-p}$ when $v\leq{}0$. Further,
\begin{align*}
  N''_p(v) 
  = \frac{-1}{(1-p-v)^2} + \frac{1}{(p-v)^2} + \frac{4}{(1-p-2v)^2}
  = \frac{3(1-p)^2 - 4v(1-p)}{(1-p-v)^2(1-p-2v)^2} + \frac{1}{(p-v)^2}
  > 0,
\end{align*}
and
\begin{align*}
  D''_p(v) 
  = \frac{-1}{(1-p-v)^2} + \frac{1}{(p-v)^2}
  < N''_p(v).
\end{align*}
We will now apply the following elementary fact.
\begin{lemma}\label{lem:func_compare}
Let $f_1$ and $f_2$ be two nonnegative, twice differentiable functions defined on $(-\infty,0]$ with $f_1(0)=f_2(0)=0$. If $f_2'(x) \leq f_1'(x)$, $f_1''(x) \leq f_2''(x)$, and $f_1''(x) \leq 0$ for all $x \leq 0$, 
then $f_1(x)/f_2(x)$ is increasing on $(-\infty,0]$.
\end{lemma}
\begin{proof}
First, observe that for any $x \leq 0$,
\begin{align*}
  f_1(x) 
  = - \int_x^0 f'_1(x) dx
  \leq - \int_x^0 f'_2(x) dx
  = f_2(x).
\end{align*}
Then, we wish to show that $\frac{d}{dx} \frac{f_1(x)}{f_2(x)}\geq 0$ for all $x \leq 0$. By quotient rule, it suffices to show that $f_1'(x)f_2(x) \geq f_1(x) f_2'(x)$. Since $f_1'(0)f_2(0) = 0 = f_1(0)f_2'(0)$, we only need to show that $f_1'(x)f_2(x) - f_1(x) f_2'(x)$ is decreasing on $(-\infty,0]$, which we do by showing it has negative derivative. In particular,
\begin{align*}
  \frac{d}{dx}\big[f_1'(x)f_2(x) - f_1(x) f_2'(x) \big]
  &= f_1''(x)f_2(x) + f_1'(x)f_2'(x) - f_1'(x) f_2'(x) - f_1(x) f_2''(x) \\
  &\leq f_1''(x)f_1(x) - f_1(x) f_2''(x) \\
  &\leq f_2''(x) f_1(x) - f_1(x) f_2''(x) \\
  &= 0,
\end{align*}
where the first inequality holds because $f_1(x) \leq f_2(x)$ and $f_1''(x) \leq 0$, and the second inequality holds because $f_1''(x) \leq f_2''(x)$ and $f_1(x)\geq{}0$.
\end{proof}

Applying \cref{lem:func_compare} to $-N_p(v)$ and $-D_p(v)$ gives that 
the minimum of $N_p(v)/D_p(v)$ over $[p-1,0)$ will be achieved at $v = p-1$.
Thus,
\begin{align*}
  &\hspace{-1em}\frac{\log(p) + \log(1-p-v) - \log(p-v) - \log(1-p-2v)}{\log(p) + \log(1-p-v) - \log(p-v) - \log(1-p) + \frac{2v}{1-p}} \\
  &\geq \frac{\log(p) + \log(1-p-(p-1)) - \log(p-(p-1)) - \log(1-p-2(p-1))}{\log(p) + \log(1-p-(p-1)) - \log(p-(p-1)) - \log(1-p) + \frac{2(p-1)}{1-p}} \\
  &= \frac{\log(p) + \log(2-2p) - \log(1) - \log(3-3p)}{\log(p) + \log(2-2p) - \log(1) - \log(1-p) - 2} \\
  &= \frac{\log(p) + \log(2) - \log(3)}{\log(p)+\log(2)-2}.
\end{align*}
Since $\log(2) < \log(3) < 2$, and since $\log(p)\leq{}0$, the
expression above decreases as $p$ increases, which means the minimum
is achieved at $p=1$, so we conclude that
\begin{align*}
  \frac{\log(p) + \log(1-p-v) - \log(p-v) - \log(1-p-2v)}{\log(p) + \log(1-p-v) - \log(p-v) - \log(1-p) + \frac{2v}{1-p}} 
  \geq \frac{\log(3) - \log(2)}{2 - \log(2)}.
\end{align*}
This means that for all $p \in (0,1)$,
\begin{align*}
  \lambda \leq \frac{\log(3) - \log(2)}{2 - \log(2)} \implies \forall \ v \in [p-1,0],\;\expfunc'_{p,\lambda}(v) \geq 0.
\end{align*}

We now consider the case when $v \in (0,p]$, which follows from the same logic as the argument for $v < 0$.
\begin{align*}
  \expfunc_{p,\lambda}(v) = p \left(1 + \frac{v}{p} \right)^\lambda \exp\left\{-\left(\frac{2\lambda v}{p} \right) \right\} + (1-p) \left(1 + \frac{v}{1-p} \right)^\lambda.
\end{align*}
Thus,
\begin{align*}
  \expfunc'_{p,\lambda}(v)
  &= p\lambda \left(1+\frac{v}{p} \right)^{\lambda-1} \left(\frac{1}{p}\right) \exp\left\{-\left(\frac{2\lambda v}{p} \right) \right\}
    - p \left(1 + \frac{v}{p} \right)^\lambda \exp\left\{-\left(\frac{2\lambda v}{p} \right) \right\} \left(\frac{2\lambda}{p} \right) \\
  & \qquad + (1-p)\lambda \left(1 + \frac{v}{1-p} \right)^{\lambda-1} \left(\frac{1}{1-p}\right) \\
  &= \lambda \left[\left(\frac{p+v}{p} e^{\frac{-2v}{p}}\right)^\lambda \left(\frac{p}{p+v} - 2 \right) + \left(\frac{1-p+v}{1-p} \right)^{\lambda-1} \right].
\end{align*}
That is,%
\begin{align*}
  &\expfunc'_{p,\lambda}(v) \leq 0 \\
&\iff  \lambda \left[\left(\frac{p+v}{p} e^{\frac{-2v}{p}}\right)^\lambda \left(\frac{p}{p+v} - 2 \right) + \left(\frac{1-p+v}{1-p} \right)^{\lambda-1} \right] \leq 0\\
  &\iff\left(\frac{1-p+v}{1-p} \right)^{\lambda-1} \leq \left(\frac{p+v}{p} e^{\frac{-2v}{p}}\right)^\lambda \left(\frac{p+2v}{p+v} \right)\\
  &\iff\frac{(1-p)(p+v)}{(1-p+v)(p+2v)} \leq \left(\frac{(p+v)(1-p)}{p(1-p+v)} e^{\frac{-2v}{p}}\right)^\lambda.
\end{align*}
Now, 
\begin{align*}
  \frac{d}{dv} \frac{(p+v)(1-p)}{p(1-p+v)} e^{\frac{-2v}{p}}
  &= \frac{1-p}{p} \left[\left(\frac{-2}{p}\right) \frac{p+v}{1-p+v} e^{\frac{-2v}{p}} - \frac{p+v}{(1-p+v)^2} e^{\frac{-2v}{p}} + \frac{1}{1-p+v} e^{\frac{-2v}{p}} \right] \\
  &= \frac{1-p}{p(1-p+v)} e^{\frac{-2v}{p}} \left[1 -\frac{p+v}{1-p+v} - \frac{2(p+v)}{p} \right] \\
  &= -\frac{1-p}{p(1-p+v)} e^{\frac{-2v}{p}} \left[1 + \frac{2v}{p} + \frac{p+v}{1-p+v} \right] \\
  &\leq 0.
\end{align*}
So, since this term is decreasing as $v$ increases, and at $v=0$ it takes the value 1, we have $\frac{(p+v)(1-p)}{p(1-p+v)} e^{\frac{-2v}{p}} \leq 1$. Thus, for all $p \in (0,1)$,
\begin{align}
  \expfunc'_{p,\lambda}(v) \leq 0 
  \iff
  \lambda \leq \frac{\log(1-p) + \log(p+v) - \log(1-p+v) - \log(p+2v)}{\log(1-p) + \log(p+v) - \log(1-p+v) - \log(p) - \frac{2v}{p}}.
  \label{eqn:case2_lambda_condition}
\end{align}
By the same argument from the $v < 0$ case applied to $N_p(-v)$ and $D_p(-v)$ defined by the RHS of (\ref{eqn:case2_lambda_condition}), and another application of \cref{lem:func_compare},
we conclude that the minimum of the RHS over $(0,p]$ will be achieved at $v=p$. That is,
\begin{align*}
  &\hspace{-1em} \frac{\log(1-p) + \log(p+v) - \log(1-p+v) - \log(p+2v)}{\log(1-p) + \log(p+v) - \log(1-p+v) - \log(p) - \frac{2v}{p}} \\
  &\geq \frac{\log(1-p) + \log(p+p) - \log(1-p+p) - \log(p+2p)}{\log(1-p) + \log(p+p) - \log(1-p+p) - \log(p) - \frac{2p}{p}} \\
  &= \frac{\log(1-p) + \log(2p) - \log(1) - \log(3p)}{\log(1-p) + \log(2p) - \log(1) - \log(p) - 2} \\
  &= \frac{\log(1-p) + \log(2) - \log(3)}{\log(1-p)+\log(2)-2}.
\end{align*}
Again, since $\log(2) < \log(3) < 2$, this decreases as $p$ decreases, which means the minimum is achieved at $p=0$, so
\begin{align*}
  \frac{\log(1-p) + \log(p+v) - \log(1-p+v) - \log(p+2v)}{\log(1-p) + \log(p+v) - \log(1-p+v) - \log(p) - \frac{2v}{p}}
  \geq \frac{\log(3) - \log(2)}{2 - \log(2)}.
\end{align*}
This implies that for all $p \in (0,1)$,
\begin{align*}
  \lambda \leq \frac{\log(3) - \log(2)}{2 - \log(2)} \implies \forall \ v \in [0,p], \expfunc'_{p,\lambda}(v) \leq 0.
\end{align*}

Combining these results, we have that for all $p \in [0,1]$,
\icml{
\begin{align*}
  \qquad
  \lambda \leq \frac{\log(3) - \log(2)}{2 - \log(2)} \implies \forall \ v \in [p-1,p],\;\;
  \EE_{y \sim p} \exp\left\{\lambda \eta(p,y) v - \lambda \abs{\eta(p,y) v} + \lambda \log\left(1 + \abs{\eta(p,y) v} \right) \right\} \leq 1.
  \ \qed
\end{align*}}
\arxiv{
\begin{align*}
  \lambda \leq \frac{\log(3) - \log(2)}{2 - \log(2)} \implies \forall \ v \in [p-1,p],\;\;
  \EE_{y \sim p} \exp\left\{\lambda \eta(p,y) v - \lambda \abs{\eta(p,y) v} + \lambda \log\left(1 + \abs{\eta(p,y) v} \right) \right\} \leq 1.
  \qed
\end{align*}}

\icml{\section{Proof of \cref{prop:compare}}}
\arxiv{\section{Proof of Proposition~\ref{prop:compare}}}
\label{prf:compare}

For case (i), 
let $\seqentropy{\F}{\gamma}{n}{\infty} = \Theta(\log(1/\gamma))$. Taking $\gamma = 1/n$ gives $\newbound{\F} = \Theta(\log(n))$, which is known to be optimal \citep[see, e.g.,][]{rissanen1996fisher}.

To handle the remaining two cases, we need to optimize $\oldbound{\F}$ for each sequential entropy specification by finding the values of $\gamma,\delta,$ and $\alpha$ that minimize the order of the largest term. Our strategy is to plug in a specific instance of these three parameters, and then show that changing them in any way will lead to an increase in the order of the largest term. We observe the following. For any $p>0$, when $\seqentropy{\F}{\gamma}{n}{\infty} = \Theta(\gamma^{-p})$, our bound is simple to optimize. The optimal parametrization is $\gamma = n^{\vphantom{1}-\frac{1}{p+1}}$, which gives $\newbound{\F} = \Theta(n^{\vphantom{1}\frac{p}{p+1}})$.

Also, for any $p > 0$, we see that (\ref{eq:oldbound}) becomes
\begin{align}
  \oldbound{\F}
  = \inf_{\overset{\gamma \geq \alpha > 0}{\delta > 0}} \tilde\Theta\Big(\frac{\alpha n}{\delta} + \sqrt{\frac{n}{\delta}} \frac{2}{2-p} \Big[\gamma^{\frac{2-p}{2}} - \alpha^{\frac{2-p}{2}} \Big]
  + \frac{1}{\delta} \frac{1}{1-p} \Big[\gamma^{1-p} - \alpha^{1-p} \Big] + \gamma^{-p} + n\delta \Big).
\label{eq:oldbound_expanded}
\end{align}
Of course, when $p = 1$ or $p = 2$ the second and first integrals respectively become $\log(1/\gamma) - \log(1/\alpha)$ rather than $0/0$. We will first consider when $p \notin \{1,2\}$, and then observe our result still holds for these specific cases. 

Next, observe that for all $p>0$, (\ref{eq:oldbound_expanded}) is convex with respect to $\alpha$. Then, differentiating with respect to $\alpha$ and setting it equal to zero (ignoring constants) gives
\begin{align*}
  \frac{n}{\delta} - \sqrt{\frac{n}{\delta}}\alpha^{-p/2} - \frac{1}{\delta}\alpha^{-p} &\approx 0, \intertext{ which we can simplify to}
  n - \sqrt{n\delta}\alpha^{-p/2} - \alpha^{-p} &\approx 0.
\end{align*}
Solving this quadratic reveals that, up to constants, $\alpha = n^{-\frac{1}{p}}$, so we only have to optimize over $\gamma$ and $\delta$. Plugging this into (\ref{eq:oldbound_expanded}), we get
\begin{align}
  \oldbound{\F}
  = \inf_{\gamma \geq n^{-\scriptscriptstyle\frac{1}{p}}, \ \delta>0}  \tilde\Theta\prn*{
  \frac{n^{\vphantom{1}\frac{p-1}{p}}}{\delta}
  + \sqrt{\frac{n}{\delta}} \frac{2}{2-p} 
    \left[\gamma^{\frac{2-p}{2}} - n^{\frac{p-2}{2p}} \right]
  + \frac{1}{\delta} \frac{1}{1-p} 
    \left[\gamma^{1-p} - n^{\vphantom{1}\frac{p-1}{p}} \right]
  + \gamma^{-p} 
  + n\delta
}.
\label{eq:oldbound_expanded2}
\end{align}

We now turn to proving statements (ii) and (iii).

(ii)
If $0 < p < 1$, taking $\gamma = n^{-\frac{1}{p+1}}$ and $\delta = n^{-\frac{1}{p+1}}$ gives
\begin{align}
  \oldbound{\F}
  &= \tilde\Theta\Big(
  n^{\vphantom{1}\frac{p^2+p-1}{p(p+1)}}
  + \frac{2}{2-p} 
    \Big[n^{\vphantom{1}\frac{p}{p+1}} - n^{\vphantom{1}\frac{2p^2+p-2}{2p(p+1)}}\Big] 
  + \frac{1}{1-p} 
    \Big[n^{\vphantom{1}\frac{p}{p+1}} - n^{\vphantom{1}\frac{p^2+p-1}{p(p+1)}} \Big]
  + n^{\vphantom{1}\frac{p}{p+1}}
  + n^{\vphantom{1}\frac{p}{p+1}}
  \Big) \nonumber \\
  &= \tilde\Theta\Big(n^{\vphantom{1}\frac{p}{p+1}}\Big).
\label{eq:dylan_small_p}
\end{align}
We need to show that (\ref{eq:dylan_small_p}) is the optimal polynomial dependence on $n$ for $\oldbound{\F}$ when $0 < p < 1$.

First, observe that when $p < 1$, $\vphantom{1}\frac{2p^2+p-2}{2p(p+1)} \leq \vphantom{1}\frac{p^2+p-1}{p(p+1)} < \vphantom{1}\frac{p}{p+1}$, so the negative terms are not cancelling all of the higher order terms. Also, since we require $\alpha \leq \gamma$, the negative (third and fifth) terms can at most cancel the second and fourth terms. Now, suppose that the highest order exponent $\vphantom{1}\frac{p}{p+1}$ could be lowered. This would require lowering the polynomial dependence on $n$ for the seventh term, which corresponds to $n\delta$ in (\ref{eq:oldbound_expanded2}). Consequently, this would require $\delta = n^{-\frac{1}{p+1}-\beta}$ for some $\beta>0$. We would then obtain
\icml{
\begin{align*}
  \oldbound{\F}
  &= \tilde\Theta\Big(
  n^{\vphantom{1}\frac{p^2+p-1}{p(p+1)}+\beta}
  + \frac{2}{2-p} 
    \Big[n^{\vphantom{1}\frac{p}{p+1}+\beta/2} - n^{\vphantom{1}\frac{2p^2+p-2}{2p(p+1)}+\beta/2}\Big] 
  + \frac{1}{1-p} 
    \Big[n^{\vphantom{1}\frac{p}{p+1}+\beta} - n^{\vphantom{1}\frac{p^2+p-1}{p(p+1)}+\beta} \Big]
  + n^{\vphantom{1}\frac{p}{p+1}}
  + n^{\vphantom{1}\frac{p}{p+1}-\beta}
  \Big).
\end{align*}}
\arxiv{
\begin{align*}
  \oldbound{\F}
  &= \tilde\Theta\Big(
  n^{\vphantom{1}\frac{p^2+p-1}{p(p+1)}+\beta}
  + \frac{2}{2-p} 
    \Big[n^{\vphantom{1}\frac{p}{p+1}+\frac{\beta}{2}} - n^{\vphantom{1}\frac{2p^2+p-2}{2p(p+1)}+\frac{\beta}{2}}\Big] 
  + \frac{1}{1-p} 
    \Big[n^{\vphantom{1}\frac{p}{p+1}+\beta} - n^{\vphantom{1}\frac{p^2+p-1}{p(p+1)}+\beta} \Big]
  + n^{\vphantom{1}\frac{p}{p+1}}
  + n^{\vphantom{1}\frac{p}{p+1}-\beta}
  \Big).
\end{align*}}
The second and fourth terms now have increased in order, and can only be lowered by taking $\gamma = n^{-\frac{1}{p+1}-\beta'}$ for some $\beta' > 0$. This results in
\begin{align*}
  \oldbound{\F}
  &= \tilde\Theta\Big(
  n^{\vphantom{1}\frac{p^2+p-1}{p(p+1)}+\beta}
  + \frac{2}{2-p} 
    \Big[n^{\vphantom{1}\frac{p}{p+1}+\beta/2-\beta'(\frac{2-p}{2})} - n^{\vphantom{1}\frac{2p^2+p-2}{2p(p+1)}+\beta/2}\Big] 
  + \frac{1}{1-p} 
    \Big[n^{\vphantom{1}\frac{p}{p+1}+\beta-\beta'(1-p)} - n^{\vphantom{1}\frac{p^2+p-1}{p(p+1)}+\beta} \Big] \\
  & \qquad~~~ + n^{\vphantom{1}\frac{p}{p+1}+\beta'p}
  + n^{\vphantom{1}\frac{p}{p+1}-\beta}
  \Big).
\end{align*}
However, now the sixth term has increased in order. So, we conclude that the exponent $\frac{p}{p+1}$ cannot be lowered, and thus (\ref{eq:dylan_small_p}) is the optimal polynomial dependence on $n$ for $\oldbound{\F}$ when $0 < p < 1$.

(iii)
If $1 < p < 2$, taking $\gamma = n^{-\frac{2p+1}{2p(2+p)}}$ and $\delta = n^{-\frac{1}{2p}}$ gives
\begin{align}
  \oldbound{\F}
  &= \tilde\Theta\Big(
  n^{\vphantom{1}\frac{2p-1}{2p}}
  + \frac{2}{2-p} 
    \Big[n^{\vphantom{1}\frac{2p+1}{2(2+p)}} - n^{\vphantom{1}\frac{4p-3}{4p}}\Big] 
  + \frac{1}{p-1} 
    \Big[-n^{\vphantom{1}\frac{2p^2+1}{2p(2+p)}} + n^{\vphantom{1}\frac{2p-1}{2p}} \Big]
  + n^{\vphantom{1}\frac{2p+1}{2(2+p)}}
  + n^{\vphantom{1}\frac{2p-1}{2p}}
  \Big) \nonumber \\
  &= \tilde\Theta\Big(n^{\vphantom{1}\frac{2p-1}{2p}}\Big).
\label{eq:dylan_medium_p}
\end{align}
 We now need to show that (\ref{eq:dylan_medium_p}) is the optimal polynomial dependence on $n$ for $\oldbound{\F}$ when $1 < p < 2$. Our argument is very similar to the argument we gave for $0 < p < 1$.

First, observe that when $1 < p < 2$, $\vphantom{1}\frac{4p-3}{4p}$ and $\vphantom{1}\frac{2p^2+1}{2p(2+p)}$ are both less than $\vphantom{1}\frac{2p-1}{2p}$, so the negative terms are not cancelling all of the higher order terms. Further, since we require $\alpha \leq \gamma$, this means we require $\gamma^{1-p} \leq \alpha^{1-p}$ for $p>1$, so at most the third term can cancel the second term and the fourth term can cancel the fifth term.

Now, suppose that the exponent $\vphantom{1}\frac{2p-1}{2p}$ could be lowered. This would again require lowering the polynomial dependence on $n$ for the seventh term, so would require $\delta = n^{-\frac{1}{2p}-\beta}$ for some $\beta>0$. This leads to
\icml{
\begin{align*}
  \oldbound{\F}
  &= \tilde\Theta\Big(
  n^{\vphantom{1}\frac{2p-1}{2p}+\beta}
  + \frac{2}{2-p} 
    \Big[n^{\vphantom{1}\frac{2p+1}{2(2+p)}+\beta/2} - n^{\vphantom{1}\frac{4p-3}{4p}+\beta/2}\Big] 
  + \frac{1}{p-1} 
    \Big[-n^{\vphantom{1}\frac{2p^2+1}{2p(2+p)}+\beta} + n^{\vphantom{1}\frac{2p-1}{2p}+\beta} \Big]
  + n^{\vphantom{1}\frac{2p+1}{2(2+p)}}
  + n^{\vphantom{1}\frac{2p-1}{2p}-\beta}
  \Big).
\end{align*}}
\arxiv{
\begin{align*}
  \hspace{-2pt}\oldbound{\F}
  &= \tilde\Theta\Big(
  n^{\vphantom{1}\frac{2p-1}{2p}+\beta}\hspace{-1pt}
  + \hspace{-1pt}\frac{2}{2-p} 
    \Big[n^{\vphantom{1}\frac{2p+1}{2(2+p)}+\frac{\beta}{2}} - n^{\vphantom{1}\frac{4p-3}{4p}+\frac{\beta}{2}}\Big] \hspace{-1pt}
  + \hspace{-1pt}\frac{1}{p-1} 
    \Big[-n^{\vphantom{1}\frac{2p^2+1}{2p(2+p)}+\beta} + n^{\vphantom{1}\frac{2p-1}{2p}+\beta} \Big]
  + n^{\vphantom{1}\frac{2p+1}{2(2+p)}}
  + n^{\vphantom{1}\frac{2p-1}{2p}-\beta}
  \Big).
\end{align*}}
The only remaining way to reduce the order is to set $\gamma = n^{-\frac{2p+1}{2p(2+p)}-\beta'}$ for some $\beta' > 0$, which leads to
\begin{align*}
  \oldbound{\F}
  &= \tilde\Theta\Big(
  n^{\vphantom{1}\frac{2p-1}{2p}+\beta}
  + \frac{2}{2-p} 
    \Big[n^{\vphantom{1}\frac{2p+1}{2(2+p)}+\beta/2-\beta'(\frac{2-p}{2})} - n^{\vphantom{1}\frac{4p-3}{4p}+\beta/2}\Big] 
  + \frac{1}{p-1} 
    \Big[-n^{\vphantom{1}\frac{2p^2+1}{2p(2+p)}+\beta+\beta'(p-1)} + n^{\vphantom{1}\frac{2p-1}{2p}+\beta} \Big] \\
  & \qquad~~~ + n^{\vphantom{1}\frac{2p+1}{2(2+p)}+\beta'p}
  + n^{\vphantom{1}\frac{2p-1}{2p}-\beta}
  \Big).
\end{align*}
Thus, the first term has increased in order, and as argued cannot be cancelled by either of the negative terms, so we conclude (\ref{eq:dylan_medium_p}) is the optimal polynomial dependence on $n$ for $\oldbound{\F}$ when $1 < p < 2$.

Otherwise, if $p > 2$, taking $\gamma = 1$ and $\delta = n^{-\frac{1}{2p}}$ gives
\begin{align}
  \oldbound{\F}
  &= \tilde\Theta\Big(
  n^{\vphantom{1}\frac{2p-1}{2p}}
  + \frac{2}{p-2} 
    \Big[-n^{\vphantom{1}\frac{2p+1}{4p}} + n^{\vphantom{1}\frac{4p-3}{4p}}\Big] 
  + \frac{1}{p-1} 
    \left[-n^{\vphantom{1}\frac{1}{2p}} + n^{\vphantom{1}\frac{2p-1}{2p}} \right]
  + 1
  + n^{\vphantom{1}\frac{2p-1}{2p}}
  \Big) \nonumber \\
  &= \tilde\Theta\Big(n^{\vphantom{1}\frac{2p-1}{2p}}\Big).
\label{eq:dylan_high_p}
\end{align}
The argument that (\ref{eq:dylan_high_p}) is the optimal polynomial dependence on $n$ for $\oldbound{\F}$ when $p>2$ follows from the same logic as when $1 < p < 2$. The only difference is that now we observe requiring $\alpha \leq \gamma$ means both $\gamma^{\frac{2-p}{2}} \leq \alpha^{\frac{2-p}{2}}$ and $\gamma^{1-p} \leq \alpha^{1-p}$. Then, any adjustment of $\delta$ will force either the first or seventh term to increase in order, and no adjustment of $\gamma$ can cause one of the negative terms to cancel this.

Finally, when $p \in \{1,2\}$, the logic is preserved since we still require $\alpha \leq \gamma$ and we are already ignoring $\polylog(n)$ factors by using $\tilde\Theta$. Thus, all cases have been considered, and dividing $\Theta\big(n^{\frac{p}{p+1}}\big)$ by the respective optimizations of (\ref{eq:oldbound}) gives the desired result.
\icml{\qed}
\arxiv{\qedarxiv}

\section{Additional Details for Proof of Theorem \ref{thm:lower}}
\label{prf:lower}

\subsection{Proof of Lemma \ref{lem:regret_risk}}
\label{prf:regret_risk}

The argument proceeds using the standard online-to-batch conversion
argument \citep[see, e.g.,][]{cesabianchi04online}.
First, for any class $\F$, we can rewrite the minimax regret
as
\begin{align*}
  \minimax{\F}
  = \sup_{x_1 \in \cX} \inf_{\pred_1 \in [0,1]} \sup_{p_1 \in [0,1]} \mathop{\EE}_{y_1 \sim p_1} \cdots \sup_{x_n \in \cX} \inf_{\pred_n \in [0,1]} \sup_{p_n \in [0,1]} \mathop{\EE}_{y_n \sim p_n} \regret{\hat{\vect{p}}}{\F}{\vect{x},\vect{y}}.
\end{align*}
Next, we observe that using the prediction strategy notation $\hat{f}$
described in \pref{sec:lower-proof}, this is equal to 
\begin{align*}
  \minimax{\F}
   =
   \inf_{\hat f}
  \sup_{x_1 \in \cX} \sup_{p_1 \in [0,1]} \mathop{\EE}_{y_1 \sim p_1} \cdots \sup_{x_n \in \cX} \sup_{p_n \in [0,1]} \mathop{\EE}_{y_n \sim p_n} \regret{\hat{\vect{f}} \circ \vect{x}}{\F}{\vect{x},\vect{y}}.
\end{align*}
Then, since the adversary is free to choose the contexts and
observations an any way to maximize the expected regret, we can move
to a lower bound by forcing them to draw $(x_t,y_t)$ i.i.d. from a
joint distribution $\cD\in\cP$. Thus,
\begin{align*}
  \minimax{\F}
  \geq 
  \inf_{\hat f}
  \sup_{\dist \in \borel} %
  \mathop{\EE}_{(x_{1:n},y_{1:n}) \sim \dist} \regret{\hat{\vect{f}} \circ \vect{x}}{\F}{\vect{x},\vect{y}}.
\end{align*}
Then, expanding the definition of regret
and applying Jensen's inequality to the $\sup_{f \in \F}$ gives
\begin{align*}
  \minimax{\F}
  \geq 
  \inf_{\hat f}
  \sup_{\dist \in \borel} \sup_{f \in \F}
  \mathop{\EE}_{(x_{1:n}, y_{1:n}) \sim \dist^{\otimes n}} \sn \Big[\logloss(\hat f_t(x_t),y_t) - \logloss(f(x_t),y_t) \Big].
\end{align*}
Next, rewriting the sum over $t$ as an average gives
\begin{align*}
  \minimax{\F}
  \geq 
  n \cdot
  \inf_{\hat f}
  \sup_{\dist \in \borel} \sup_{f \in \F}
  \mathop{\EE}_{t \in [n]} \mathop{\EE}_{(x_{1:t}, y_{1:t}) \sim \dist^{\otimes t}} \Big[\logloss(\hat f_t(x_t),y_t) - \logloss(f(x_t),y_t) \Big].
\end{align*}
Clearly, for any distribution $\dist \in \borelF$,
\begin{align*}
  \sup_{f \in \F} \mathop{\EE}_{t \in [n]} \mathop{\EE}_{(x_{1:t}, y_{1:t}) \sim \dist^{\otimes t}} \Big[- \logloss(f(x_t),y_t)\Big]
  = - \inf_{f \in \F} \mathop{\EE}_{(x,y) \sim \dist} \Big[\logloss(f(x),y)\Big]
  = - \mathop{\EE}_{(x,y) \sim \dist}\Big[\logloss(\optf(x),y)\Big].
\end{align*}
Further, by convexity of $\logloss$ in the first argument, for any $\hat f$ and $\dist$ it holds that
\begin{align*}
  &\hspace{-1em}\mathop{\EE}_{t \in [n]} \mathop{\EE}_{(x_{1:t}, y_{1:t}) \sim \dist^{\otimes t}} \Big[\logloss(\hat f_t(x_t),y_t) \Big] \\
  &= \mathop{\EE}_{(x_{1:n-1}, y_{1:n-1}) \sim \dist^{\otimes n-1}} \mathop{\EE}_{(x,y)\sim\dist} \mathop{\EE}_{t \in [n]} \Big[\logloss(\hat f_t(x),y) \Big] \\
  &\geq \mathop{\EE}_{(x_{1:n-1}, y_{1:n-1}) \sim \dist^{\otimes n-1}} \mathop{\EE}_{(x,y)\sim\dist} \Big[\logloss(\bar f_n(x),y) \Big],
\end{align*}
where we've used that the function $\hat f_t$ is determined by $(x_{1:t-1}, y_{1:t-1})$. 
Thus, the minimax regret is further lower bounded by
\begin{align*}
  \minimax{\F}
  \geq n \cdot{} 
  \inf_{\hat f}
  \sup_{\dist \in \borelF} \EE
    \left[\logloss(\bar f_{n}(x),y) - \logloss(\optf(x),y) \right],
\end{align*}
where $\EE$ is shorthand for the expectation over $(x_{1:n-1}, y_{1:n-1}) \sim \dist^{\otimes n-1}$ and $(x,y) \sim \dist$ with $\dist$ satisfying $y \lvert x \sim \berndist(\optf(x))$. Finally, since the difference of log loss is exactly the KL divergence conditional on an input $x$,
\begin{align}
  \minimax{\F}
  \geq n \cdot{} \inf_{\hat f} \sup_{\dist \in \borelF} \EE \Big[\KL{\berndist(\optf(x))}{\berndist(\bar f_{n}(x))}\Big],
\label{eqn:lb-KL-risk}
\end{align}
where $\berndist(p)$ denotes the Bernoulli distribution with mean
$p$. The result then follows by observing that the RHS of
(\ref{eqn:lb-KL-risk}) upper bounds the $\inf$ over the risk of all $\hat f_n$, since this includes the possibility of $\hat f_n = \bar f_n$.
\icml{\qed}
\arxiv{\qedarxiv}

\subsection{Additional Lemmas}\label{prf:lower_additional}

\begin{lemma}\label{lem:kl_eps}
For any $0 < \eps \leq 1/2$ and $q \in [0,1]$,
\begin{align*}
  \KL{\eps}{q} \geq \frac{\eps}{4}\1\{q \geq 2\eps\} + \frac{\eps}{6} \1\{ q \leq \eps/2\}.
\end{align*}
\end{lemma}
\begin{proof}
Fix $0 < \eps \leq 1/2$. First, observe that for all $p,q \in [0,1]$,
\begin{align*}
  (1-p)\log\left(\frac{1-p}{1-q} \right) = (1-p)\log\left(1 + \frac{q-p}{1-q} \right) \geq q-p,
\end{align*}
where we have used that $\log(1+x) \geq \frac{x}{1+x}$ for all $x \geq -1$. Then,
\begin{align*}
  \KL{\eps}{2\eps} 
  = \eps \log\left(\frac{\eps}{2\eps} \right) + (1-\eps)\log\left(\frac{1-\eps}{1-2\eps} \right)
  \geq -\eps\log(2)  + 2\eps - \eps
  = \eps(1 - \log(2))
  \geq \frac{\eps}{4}.
\end{align*}
Similarly,
\begin{align*}
  \KL{\eps}{\eps/2} 
  = \eps \log\left(\frac{\eps}{\eps/2} \right) + (1-\eps)\log\left(\frac{1-\eps}{1-\eps/2} \right)
  \geq \eps\log(2)  + \eps/2 - \eps
  = \eps(\log(2) - 1/2)
  \geq \frac{\eps}{6}.
\end{align*}

Next, consider the map $f(q) = \KL{\eps}{q}$. By definition,
\begin{align*}
  f'(q) = -\frac{\eps}{q} + \frac{1-\eps}{1-q} = \frac{q-\eps}{q(1-q)}.
\end{align*}
That is, $f'(q) \geq 0$ when $q \geq \eps$ and $f'(q) < 0$ when $q < \eps$, so if $q \geq 2\eps$ then $\KL{\eps}{q} \geq \eps/4$ and if $q \leq \eps/2$ then $\KL{\eps}{q} \geq \eps/6$.
\end{proof}

\begin{lemma}\label{lem:lipschitz}
For any $p \in \Nats$, let $\F = \{f:[0,1]^p \to \Reals \mid \forall x,y\in[0,1]^p \ \abs{f(x) - f(y)} \leq \Norm{x - y}_\infty \}$ denote the class of 1-Lipschitz functions. Then, $\seqentropy{\F}{\gamma}{n}{\infty} = \Theta(\gamma^{-p})$.
\end{lemma}
\begin{proof}
Since sequential entropy is never larger
than the uniform metric entropy,
the upper bound follows from a standard argument bounding the
uniform entropy for Lipschitz functions \citep[see, e.g., Example~5.10 of][]{wainwright19book}. 

Then, we observe that sequential entropy is lower bounded by empirical entropy, 
\begin{align*}
  \empentropy{\F}{\gamma}{n}{\infty} = \sup_{x_{1:n}} \log(\unifcover{\F\rvert_{x_{1:n}}}{\gamma}{\infty}),
\end{align*}
where $\unifcover{\F\rvert_{x_{1:n}}}{\gamma}{\infty}$ denotes the $L_{\infty}$
covering number of the restriction of $\F$ to $x_{1:n} =
(x_1,\dots,x_n)$. This bound holds since any dataset $x_{1:n}$ can be
turned into a tree $\vect{z}$ of depth $n$ by taking $\node{z} = x_t$
for all $y \in \{0,1\}^n$. Divide $[0,1]^p$ into $\gamma^{-p}$
equally spaced intervals and once again use the usual packing
construction for Lipschitz functions from \citet{wainwright19book}, which
has size $2^{(\gamma/c)^{-p}}$ for some numerical constant $c>0$. Then, once $n > \gamma^{-p}$, this
construction serves as a packing on $\F\rvert_{x_{1:n}}$ when
$x_{1:n}$ contains the grid coordinates, so
$\empentropy{\F}{\gamma}{n}{\infty} \geq \Omega(\gamma^{-p})$.
\end{proof}

\end{document}